\documentclass[10pt,a4paper]{amsart} 
\usepackage[T1]{fontenc}
\usepackage{amsmath,amsfonts,amssymb,graphicx,url,amsthm, mathrsfs}
\numberwithin{equation}{section}
\usepackage[left=2.5cm,right=2.5cm,top=2.5cm,bottom=2.5cm]{geometry}
\usepackage{enumitem,kantlipsum}
\usepackage{xcolor,color,nicefrac,svg}
\usepackage{graphicx, bbding}
\usepackage{booktabs}
\usepackage{csquotes} 
\usepackage{caption}
\usepackage{subcaption}
\usepackage{mwe}
\usepackage{afterpage}
\usepackage[noend]{algpseudocode}
\usepackage[section,ruled]{algorithm}
\usepackage[export]{adjustbox}
\usepackage{natbib} 
\setlength{\bibsep}{10pt}
\usepackage{setspace}
\usepackage[hyperfootnotes=true]{hyperref}
\hypersetup{colorlinks=true,raiselinks=true,breaklinks=true,citecolor=blue}
\linespread{1.1}
\setlength{\parindent} {0mm}
\setlength{\parsep}    {0mm}
\setlength{\parskip}   {0.2\baselineskip}
\usepackage{tikz,pgfplots,pgfplotstable}
\pgfplotsset{compat=newest}
\usetikzlibrary{arrows,decorations,backgrounds,positioning,calc,matrix}
\usepgfplotslibrary{groupplots}
\usetikzlibrary{external}
\tikzexternalize[prefix=figures/]
\tikzexternalize[mode=list and make]
\tikzset{
    %
    png export/.style={
        external/system call/.add={}{; convert -density 600 -transparent white "\image.pdf" "\image.png"},
        /pgf/images/external info,
        /pgf/images/include external/.code={%
            \includegraphics
            [width=\pgfexternalwidth,height=\pgfexternalheight]
            {##1.png}%
        },
    },
    png export,
}

\newtheorem{theorem}{Theorem}[section]
\newtheorem{lemma}[theorem]{Lemma}
\newtheorem{corollary}[theorem]{Corollary}

\newtheorem{definition}[theorem]{Definition}

\newtheorem{remark}[theorem]{Remark}

\usepackage{xparse}
\usepackage{verbatim}
\newsavebox{\fminipagebox}
\NewDocumentEnvironment{fminipage}{m O{\fboxsep}}
 {\par\kern#2\noindent\begin{lrbox}{\fminipagebox}
  \begin{minipage}{#1}\ignorespaces}
 {\end{minipage}\end{lrbox}%
  \makebox[#1]{%
    \kern\dimexpr-\fboxsep-\fboxrule\relax
    \fbox{\usebox{\fminipagebox}}%
    \kern\dimexpr-\fboxsep-\fboxrule\relax
  }\par\kern#2
 }

\def\letters{a,b,c,d,e,f,g,h,i,j,k,l,m,n,o,p,q,r,s,t,u,v,w,x,y,z}
\def\Letters{A,B,C,D,E,F,G,H,I,J,K,L,M,N,O,P,Q,R,S,T,U,V,W,X,Y,Z}
\makeatletter
\@for \@l:=\Letters \do{%
  \expandafter\edef\csname\@l bb\endcsname{\noexpand\ensuremath{%
  \noexpand\mathbb{\@l}}}%
  \expandafter\edef\csname\@l bf\endcsname{{\noexpand\bf \@l}}%
  \expandafter\edef\csname\@l cal\endcsname{\noexpand\ensuremath{%
  \noexpand\mathcal{\@l}}}%
  \expandafter\edef\csname\@l eu\endcsname{\noexpand\ensuremath{%
  \noexpand\EuScript{\@l}}}%
  \expandafter\edef\csname\@l frak\endcsname{\noexpand\ensuremath{%
  \noexpand\mathfrak{\@l}}}%
  \expandafter\edef\csname\@l rm\endcsname{{\noexpand\rm \@l}}%
  \expandafter\edef\csname\@l scr\endcsname{\noexpand\ensuremath{%
  \noexpand\mathscr{\@l}}}%
}
\@for \@l:=\letters \do{%
  \expandafter\edef\csname\@l bf\endcsname{{\noexpand\bf \@l}}%
  \expandafter\edef\csname\@l frak\endcsname{\noexpand\ensuremath{%
  \noexpand\mathfrak{\@l}}}%
  \expandafter\edef\csname\@l scr\endcsname{\noexpand\ensuremath{%
  \noexpand\mathscr{\@l}}}%
}
\makeatother
\definecolor{shadecolor}{rgb}{0.6, 0.6, 0.6} 
\definecolor{red}{rgb}{1,0,0.2} 
\definecolor{darkgreen}{rgb}{0, 0.6, 0}

\newcommand{\isdef}{\mathrel{\mathrel{\mathop:}=}}
\newcommand{\defis}{\mathrel{=\mathrel{\mathop:}}}

\newcommand{\R}{\mathbb{R}}
\newcommand{\N}{\mathbb{N}}

\newcommand{\kernel}{\mathcal{K}}

\renewcommand{\d}{\operatorname{d}\!}
\newcommand{\bs}{\boldsymbol}

\DeclareMathOperator{\sig}{sign}
\DeclareMathOperator{\spn}{span}
\DeclareMathOperator{\rank}{rank}

\DeclareMathOperator{\trace}{tr}

\DeclareMathOperator*\argmin{\arg\!\min}

\DeclareMathOperator\supp{supp}

\newcommand{\diag}{\operatorname{diag}}
\overfullrule1pt
\newcommand{\StateIndent}{\hspace{\algorithmicindent}}
\sloppy
\begin{document}
\title{Fast empirical scenarios}
 \thanks{We gratefully acknowledge support from the SNF grant 100018\_189086 ``Scenarios''.}
\author{Michael Multerer}
\address{
Michael Multerer,
Euler Institute,
USI Lugano,
Via la Santa 1, 6962 Lugano, Svizzera.}
\email{michael.multerer@usi.ch}
\author{Paul Schneider}
\address{
Paul Schneider,
USI Lugano and SFI,
Via Buffi 6, 6900 Lugano, Svizzera.}
\email{paul.schneider@usi.ch}
\author{Rohan Sen}
\address{
Rohan Sen,
Euler Institute,
USI Lugano,
Via la Santa 1, 6962 Lugano, Svizzera.}
\email{rohan.sen@usi.ch}

\maketitle
\begin{abstract}

We seek to extract a small number of
representative scenarios from large panel
data that are consistent with sample moments. Among two novel algorithms,
the first identifies scenarios that have not been observed before, and comes
with a scenario-based representation of covariance
matrices. The second proposal selects important data points from states
of the world that have already realized, and are consistent with
higher-order sample moment information. Both algorithms are efficient
to compute and lend themselves to consistent scenario-based modeling
and multi-dimensional numerical integration that can be used for interpretable decision-making under uncertainty.
Extensive numerical benchmarking studies and an application in portfolio
optimization favor the proposed algorithms.
\end{abstract}

\section{Introduction}\label{sec:intro}

Multi-dimensional data in various fields require efficient processing for 
informed decision-making. In many practical applications, the moments 
of the sample realizations induced by the sample distribution, together with the sample realizations themselves,  become central 
to inference tasks. For example, investors
care not only about the variance of outcomes but also about the higher-order moments of the distribution of outcomes, in particular, about the possibility of extremely adverse outcomes. Similarly, in the case of estimating the stochastic discount factor (SDF) for asset pricing, one must ensure that it incorporates adequate information about the higher moments of the return distribution as it is important for modeling tail risk, see \cite{Schneider2012,Almeida2017, Almeida2022}. As a result, portfolio optimization involving risky assets 
necessitates covariance matrices or even higher-order and/or non-linear functions 
of the moments, while the corresponding risk management is often scenario-based. 
The factor structure in interest rates also suggests a scenario-based approach, 
see \cite{ENGLE2017333}. 
More generally, the topic of summarizing information by replacing large samples 
of data with a small number of carefully weighted scenarios has found traction 
in many different communities, eg.\,\,scalable Bayesian statistics, see
\cite{Huggins2016CoresetsFS}, clustering and optimization, see \cite{feldman}, etc. In the discipline of explainable artificial intelligence, the scenarios' probabilities can be used to define observation-specific explanations that give rise to a novel class of surrogate models, see \cite{ghidini2024}.

In this article, we are concerned with finding scenarios that exploit the 
availability of realizations from large samples of multi-dimensional data sets. 
The theoretical basis of our problem lies within the
\emph{truncated moment problem} (TMP), that underlies the theory of 
multivariate quadrature integration rules, see 
\cite{Laurent2009-ee, Lasserre,schmuedgen17}. The TMP asks the
question whether a finite sequence can be identified as the moment sequence of an
underlying non-negative Borel measure, and if so, how to find another representing
discrete measure (having finite support) with the smallest number of atoms that 
generates the same moment sequence. 
Keeping in mind data-driven applications, we introduce the concept of the
\emph{empirical moment problem} (EMP), in analogy to the TMP, in which moments are 
derived from the sample measure induced by the data. 
In particular, the EMP can be realized as a quadrature problem that aims to reduce 
the support set of the sample measure by choosing representative scenarios with the 
additional constraints of non-negativity and normalization of the corresponding weights.

\subsection{Related work} 
Characterizing finite sequences as the moments of a non-negative Borel measure is the 
main idea behind the TMP and we refer to \cite{Laurent2009-ee, Lasserre, schmuedgen17} 
for a comprehensive discussion of the same. In particular, given a sequence of moments
of a non-negative Borel measure, an algorithm is provided in \cite{Lasserre}, that 
seeks to extract the finite support of another representing measure that gives rise 
to the same sequence. In its prototypical form, it is akin to the multivariate Prony's 
method, see \cite{Prony}. However, for practical purposes, especially for the EMP, 
Lasserre's algorithm quickly becomes numerically and computationally demanding as 
the number of dimensions grows. Moreover, the number of extracted atoms is typically high. 
A linear programming approach is proposed by \cite{Gauss_LP}, which is also found to be 
numerically prohibitive for dimensions greater than three.

The EMP can also be formulated as a numerical quadrature problem, wherein the goal is to 
approximate
\begin{equation}\label{eq:numerical_quadrature}
    \int_{\Omega} h(\bs x) \d\mu(\bs x) \approx \sum_{j=1}^m w_j h(\bs \xi_j),
\end{equation}
for a given probability measure $\mu$ on $\Omega \subset \R^d$ by carefully choosing the 
\emph{quadrature nodes} $\bs \xi_1,\ldots,\bs \xi_m$ and the corresponding \emph{weights} 
$w_1,\ldots,w_m$. Joint optimization of both the nodes and the weights, in general, 
corresponds to a non-convex problem, see \cite{Jen_PhD}. Existing approaches take Monte 
Carlo samples of the nodes (if the underlying distribution is known and can be simulated 
from), and then find the optimal weights, that minimize a root mean squared quadrature 
error. In the quasi-Monte Carlo (QMC) literature, the nodes are chosen by deterministic, 
low-discrepancy sequences, that loosely translate to a \enquote{well-spread} set of nodes, 
and then choosing uniform weights, see 
\cite{SOMMARIVA20091324, BosSDeMarchiSommarivaVianello, bittante, BSV10}.

With the assumption  that $h$ belongs to some \emph{reproducing kernel Hilbert space} 
(RKHS) $\Hcal$ of functions with reproducing kernel $\kernel$, the worst-case quadrature 
error may be expressed using the reproducing property as, see \cite{Jen_PhD, Bach2017},
\begin{equation}\label{eq:quadrature_error}
\begin{split}
    \sup_{\|h\|_{\Hcal}\leq 1}\bigg| \int_{\Omega} h(\bs x)\d\mu(\bs x) 
    - \sum_{j=1}^m w_j h(\bs \xi_j)\bigg| &= \sup_{\|h\|_{\Hcal}\leq 1} 
    \bigg|\bigg\langle h, \int_{\Omega}\kernel(\bs x, \cdot)\d\mu(\bs x) 
    - \sum_{j=1}^m w_j \kernel(\bs \xi_j, \cdot)\bigg\rangle_{\Hcal}\bigg|\\
    &\leq \bigg\| \int_{\Omega}\kernel(\bs x, \cdot)\d\mu(\bs x) 
    - \sum_{j=1}^m w_j \kernel(\bs \xi_j,\cdot)\bigg\|_{\Hcal}
\end{split}
\end{equation}
An approximation is obtained by choosing ideal candidates for the nodes and weights that 
minimize the right-hand side of the inequality \eqref{eq:quadrature_error}. We note that 
the literature on kernel-based quadrature rules is extensive, and we list a few that are 
relevant to our problem. \cite{Jen_PhD} finds optimal weights given Monte Carlo nodes. 
\cite{Bach2017, Belhadji2019KernelQW, hayakawa2022positively} choose nodes based on random 
designs, which are based on either a Mercer-based decomposition, see \cite{Bach2017}, or 
a Nystr{\"o}m approximation of the kernel.

Scattered data approximation, see \cite{Wendland_2004, Fasshauer_2007}, is another field 
which is related to the EMP. The question is how to construct kernel interpolants that can 
approximate functions in the RKHS sufficiently well. Our proposal is closest in spirit to 
this literature, see \cite{DeMarchi2010, PS2011}. \cite{RandomizedDiscreteMeasures} and \cite{GHTP} also 
provide algorithms that perform greedy subsampling of nodes from a set 
of samples. 

\subsection{Contributions}
We propose two algorithms for scenario extraction, where the first one can be applied to 
both the TMP and the EMP, while the second one is suited solely for the EMP. The first algorithm  produces scenarios with uniform weighting utilizing Householder reflections, specializing the TMP to the uniform measure. It is the fastest algorithm considered in 
this paper and reveals a set of uniformly distributed \emph{covariance scenarios} whose moment 
sequence matches up perfectly with that of the sample measure up to second-order moments.

The second algorithm is designed for solving the EMP, wherein the goal is to choose a good 
representative set of $m$ scenarios from a given set of $N$ samples, that matches the 
polynomial moments of the data samples sufficiently well in the regime of $m\ll N$. 
In particular, we reduce the support set of the sample measure to produce a finite atomic 
probability measure that ensures both positivity and normalization of the atomic weights. 
There are several reasons why such convex specifications of the weights are preferable: 
the RKHS may be mis-specified if the weights are negative and also to maintain positivity 
of the integral operator $h \mapsto \int h(\bs x) \d\mu(\bs x)$, 
see \cite{hayakawa2022positively}. Furthermore, the normalization also helps us in assessing 
the relative importance of the scenarios in describing the sample moment information and can 
thus be used for generative modeling of the underlying data.
The proposed algorithm offers a computational solution to the data-dependent orthogonal 
matching pursuit in the RKHS as in \cite{PS2011}, and is hence referred to as 
\emph{orthogonal matching pursuit} (OMP). It is based on the pivoted-Cholesky, see, e.g.,
\cite{HPS12}, decomposition of the kernel matrix which exempts us from choosing the scenarios 
(or atoms of the reduced measure) using a Mercer expansion of the associated kernel of our 
proposed RKHS.

We demonstrate the robustness, computational efficiency, and adaptivity of the algorithm 
for large samples of multivariate data and contrast it with existing approaches in 
the literature. Furthermore, we demonstrate the capacity of the scenarios (extracted 
with only sample moment information) to capture tail risk as well. We show this in the 
context of portfolio optimization of panel data of asset returns with (non-smooth) 
expected shortfall constraint.


Finally, we prove that basis pursuit or LASSO, see 
\cite{StatisticalLearning, foucart2013mathematical},
induced by $\ell_1$-regularized least squares, a standard approach in the literature on 
compressive sensing and machine learning, does not lend itself to recovering optimal
quadrature rules in the context of the EMP. To the best of our knowledge, we provide 
the first result that shows the inadequacy of the LASSO in constructing quadrature rules 
from sample measures. As a consequence of this, first-order proximal algorithms cannot 
be used in this context either.

\subsection{Outline} The remainder of this article is organized as follows. 
In Section~\ref{sec:TMP}, we formally introduce the TMP and the notion of moment matrices, 
based on which we propose covariance scenarios. In Section~\ref{sec:EMP}, we present the
empirical version, the EMP, and reformulate the problem in an RKHS framework. 
In Section~\ref{sec:scenarioextraction}, we propose the OMP algorithm, and demonstrate its 
applicability for the EMP. Moreover, we comment on
why basis pursuit cannot work in the context of 
the EMP. Section~\ref{sec:numExp} addresses numerical experiments, where we benchmark our 
proposed algorithms against existing approaches. In Section~\ref{sec:Conclusion},
we conclude and identify areas for future research. 

\section{The Truncated Moment Problem}\label{sec:TMP}
In this section, we review the key theoretical background of the 
TMP as the underlying mathematical foundation. Moreover, 
we make the connection of TMPs with quadrature rules in the multi-dimensional
framework. Finally, we present a specialized algorithm for the direct extraction 
of scenarios from covariance matrices in a fast manner.

\subsection{Background}\label{subsec:background} 
Let $\Omega \subset \R^d$ and $\Bscr$ denote the Borel $\sigma$-algebra on $\Omega$. 
Suppose that we are given a finite real sequence 
$\bs y = \big(y_{\bs \alpha}\big)_{|\bs \alpha| \leq q}$ 
indexed by $\bs \alpha \isdef (\alpha_1,\ldots,\alpha_d) \in \N^d$ with 
$|\bs \alpha| \isdef \alpha_1 + \cdots + \alpha_d  \leq q$, where $q \in \N$. 
We say that $\bs y$ has a \emph{representing measure} if there exists a
Borel measure \( \mu\colon\Bscr \to [0, \infty)\) such that 
\begin{equation}\label{eq:mom_prob}
y_{\boldsymbol\alpha}=\int_{ \Omega}{\boldsymbol x}^{\boldsymbol\alpha}\d\mu
\quad\text{for }|\boldsymbol\alpha|\leq q.
\end{equation}
If \eqref{eq:mom_prob} holds, $\bs y$ is called a \emph{truncated moment sequence} (TMS). 
The TMP asks: How to check if a finite sequence $\bs y$ has a representing measure? 
If such a measure exists, how do we find it?

\cite{bayerandteichmann06} proved a key result: \emph{``if a finite sequence 
$\bs y = \big(y_{\bs \alpha}\big)_{|\bs \alpha| \leq q}$ has a representing measure 
$\mu$, then it has another representing measure $\nu$ which has finite support with 
$\supp(\nu) \subseteq \supp(\mu)$''}. For truncated moment problems, the question of 
existence and the subsequent recovery of these finite representing measures require 
the notion of a \emph{moment matrix} and its associated \emph{flat extension}, 
see \cite{Laurent2009-ee,Lasserre,schmuedgen17}. Towards this end, we denote  by
\(\Pscr_q(\Omega)\) the space of all polynomials
of total degree \(q\), i.e.,
\(
\Pscr_q(\Omega)\isdef\spn\{{\boldsymbol x}^{\boldsymbol\alpha}:
\bs x \in \Omega, |\boldsymbol\alpha|\leq q\}
\), whose dimension is known to be
\begin{equation}\label{eq:mq}
m_q\isdef\binom{q+d}{d}.
\end{equation}
Letting the row vector
\begin{equation}\label{eq:monombasis}
  {\boldsymbol\tau_q}({\boldsymbol x})\isdef
\Big[1,x_1,\ldots,x_d,x_1^2,x_1x_2,\ldots x_d^2,\ldots,x_1^q,\ldots,x_d^q
\Big]  
\end{equation}
denote the monomial basis in \(\Pscr_q(\Omega)\), we may express every
\(p\in\Pscr_q(\Omega)\) according to
\[
p({\boldsymbol x})={\boldsymbol\tau_q}({\boldsymbol x}){\boldsymbol p}
\isdef\sum_{|\boldsymbol\alpha|\leq q}
p_{\boldsymbol\alpha}{\boldsymbol x}^{\boldsymbol\alpha}
\]
for a suitable coefficient vector
$\boldsymbol p=(p_{\boldsymbol\alpha})_{|\boldsymbol\alpha|\leq q}
\in\Rbb^{m_q}$. Every TMS $\bs y$ defines a linear functional $\Lscr_{\bs y}$ 
acting on $\Pscr_q(\Omega)$ as
\[
\Lscr_{\bs y}\Big(\sum_{|\bs \alpha| \leq q} p_{\bs \alpha}\bs x^{\bs \alpha}\Big) 
\isdef \sum_{|\bs \alpha| \leq q} p_{\bs \alpha} y_{\bs \alpha}.
\]
\begin{definition}\label{def:flatextension}  
Let \({\boldsymbol y}\in\Rbb^{m_{2q}}\), cp.\eqref{eq:mom_prob}. 
The \emph{moment matrix} of order $q$ is defined as
\begin{equation}\label{eq:moment_matrix}
    {\boldsymbol M}_{\boldsymbol y}
\isdef\mathscr{L}_{\boldsymbol y}({\boldsymbol\tau_q^\top}{\boldsymbol\tau_q})
=\big[y_{\boldsymbol\alpha+\boldsymbol\beta}\big]_{|\boldsymbol\alpha|,
|\boldsymbol\beta|\leq q}\in\Rbb^{m_q\times m_q},
\end{equation}
where the action of \(\mathscr{L}_{\boldsymbol y}\) on
\({\boldsymbol\tau_q^\top}{\boldsymbol\tau_q}\)
has to be understood element-wise, cp.\ \eqref{eq:monombasis}. If there exists
a vector \(\tilde{\boldsymbol y}\in\Rbb^{m_{2(q+1)}}\) such that
\(\tilde{y}_i=y_i\) for \(i=1,\ldots, m_{2q}\) and
${\boldsymbol M}_{\tilde{\boldsymbol y}}$ is positive semi-definite with
\(\rank {\boldsymbol M}_{\tilde{\boldsymbol y}}
=\rank{\boldsymbol M}_{{\boldsymbol y}}\),
we call \({\boldsymbol M}_{\tilde{\boldsymbol y}}\) a \emph{flat extension}
of \({\boldsymbol M}_{{\boldsymbol y}}\).
\end{definition}

An $r$-\emph{atomic measure} 
$\mu$ is a positive linear combination of $r$ Dirac measures i.e.,
\begin{equation}\label{eq:ratomicmeasure}
\mu = \sum_{j=1}^{r} \lambda_{j}\delta_{\boldsymbol\xi_j},\quad
\lambda_{1}, \ldots , \lambda_{r} >0. 
\end{equation}
The points ${\boldsymbol\xi}_{j}\in\Rbb^d$ are called the \emph{atoms}
of $\mu$, which we refer to as \emph{scenarios} in our context.
We state the following important result due to \cite{curtofialkow96} that 
characterizes TMS having finite representing measures, cp.\cite{Lasserre}.
\begin{theorem}[\citep{curtofialkow96}]\label{res:curtofialkow}
Let \(\bs y \in \R^{m_{2q}}\) with $\rank \bs M_{\bs y} = r$. Then $\bs y$ has a 
unique $r$-atomic representing measure on $\R^d$ iff 
the moment matrix $\bs M_{\bs y}$ is positive semi-definite and has a 
flat extension. 
\end{theorem}

A particular consequence of Theorem~\ref{res:curtofialkow} is that any moment 
matrix \({\boldsymbol M}_{\boldsymbol y}\in\Rbb^{m_q\times m_q}\) associated to 
an \(r\)-atomic measure \(\mu\) can be represented in the \emph{Vandermonde form}. 
\begin{equation}\label{eq:VandermondeForm}
    {\boldsymbol M}_{\boldsymbol y}= \sum_{j=1}^{r}\lambda_j \boldsymbol
    \tau_q^\top(\boldsymbol \xi_j)
\boldsymbol \tau_q(\boldsymbol \xi_j)= 
{\boldsymbol V}^\top_q({\boldsymbol\xi}_1,\ldots,
{\boldsymbol\xi}_r){\boldsymbol\Lambda}
{\boldsymbol V}_q({\boldsymbol\xi}_1,\ldots,{\boldsymbol\xi}_r),
\end{equation}
with ${\boldsymbol\xi}_1,\ldots,{\boldsymbol\xi}_r$  the scenarios and
their probabilities \({\boldsymbol\Lambda}
\isdef\operatorname{diag}(\lambda_1,\ldots,\lambda_r)\),
see \cite{Lasserre,schmuedgen17}. In \eqref{eq:VandermondeForm}, 
the matrix \({\boldsymbol V}_q({\boldsymbol\xi}_1,\ldots,{\boldsymbol\xi}_r)\)
is the (generalized) Vandermonde matrix
\begin{equation}\label{eq:Vdmde}
{\boldsymbol V_q}({\boldsymbol\xi}_1,\ldots,{\boldsymbol\xi}_r)
\isdef\begin{bmatrix}{\boldsymbol\tau_q}({\boldsymbol\xi}_1)\\ 
\vdots\\
{\boldsymbol\tau_q}({\boldsymbol\xi}_r)
\end{bmatrix}\in\Rbb^{r\times m_q}.
\end{equation}

\subsection{Connection to multivariate quadrature}
For a Borel measure $\mu$ with support $\Omega \subset \R^d$ and 
$\Pscr_q(\Omega) \subset L^1(\Omega, \mu)$, 
a \emph{quadrature rule of degree} $q$ and 
\emph{size} $m\in\Nbb$ consists of nodes 
$\bs \xi_1,\ldots,\bs \xi_m$ and positive weights 
$\lambda_1,\ldots,\lambda_m$ such that 
\begin{equation}\label{eq:quadrature}
    \int_{\Omega} p(\bs x) \d\mu = \sum_{j=1}^m \lambda_j  p(\bs \xi_j) \quad
    \text{for all } p \in \Pscr_q(\Omega). 
\end{equation}
The next result makes explicit the equivalence between the existence of finite 
atomic representing measures for TMPs to the existence of quadrature rules 
in multi-dimensions.
\begin{theorem}[\citep{bayerandteichmann06}]\label{res:gen_tchakaloff}
    Let $\mu$ be a Borel measure with support set $\Omega \subset \R^d$ 
    such that $\Pscr_{q}(\Omega) \subset L^1(\Omega, \mu)$. Then there exists 
    a quadrature rule of degree $q$ and size $1\leq m \leq m_q$.
\end{theorem}

Considering the truncated sequence that is given by the moments of $\mu$ up to 
degree $q$, the existence of a quadrature rule of degree $q$, cp.\ 
Theorem \ref{res:gen_tchakaloff}, implies an affirmative answer to the 
question of having another finite atomic representing measure for the same sequence. 
The latter measure is given by a linear combination of the Dirac measures supported
at the scenarios.

\begin{remark}\label{rem:GaussQuad_1}
    For a quadrature rule of degree $q$, the size estimate $m_q$ in 
    Theorem \ref{res:gen_tchakaloff} is not sharp in general. For certain sets and 
    measures, there exist Gaussian-type quadrature rules for which the size is
    considerably smaller than the one given by Theorem \ref{res:gen_tchakaloff}, 
    see \cite{Xu2020,fialkow1999, curtofialkow2002} and the references therein. Especially 
    in the case $d=1$, any Borel measure on $\R$ having moments up to degree $q$ admits a 
    Gaussian quadrature of degree $q$ with size $\leq \lfloor q/2 \rfloor + 1$.  
\end{remark}

Given a TMS, Lasserre's algorithm, see \cite[Algorithm 4.2]{Lasserre} relies on the 
Vandermonde form cp.\ \eqref{eq:VandermondeForm} and Theorem \ref{res:curtofialkow}. 
As a result, this method necessitates computing a flat extension  of the moment matrix. 
However, obtaining a flat extension is difficult, particularly in high dimensions and 
for $q>1$, see \cite{Helton2012}. 

\subsection{Covariance scenarios} 
In this paragraph, we suggest an approach 
for the particular case when 
${\boldsymbol M}_{\boldsymbol y}\in\Rbb^{m_1\times m_1},
{\boldsymbol y}\in\Rbb^{m_2}$. To compute the Vandermonde form as
in \eqref{eq:VandermondeForm} without the need to obtain a flat extension, 
we rely on Householder reflections, see\ \cite{Householder}, for the following theorem.
\begin{theorem}\label{thm:CovScen}
Let ${\boldsymbol R}\in\Rbb^{m_1\times r}$ be a matrix root of the
covariance matrix ${\boldsymbol M}_{\boldsymbol y}\in\Rbb^{m_1\times m_1}$,
that is \({\boldsymbol M_{\boldsymbol y}}={\boldsymbol R}{\boldsymbol R}^\top\).
Then, the Vandermonde form of ${\boldsymbol M}_{\boldsymbol y}$ reads
\(
{\boldsymbol M}_{\boldsymbol y}={\boldsymbol V}^\top
{\boldsymbol\Lambda}{\boldsymbol V}
\)
with ${\boldsymbol V}=\sqrt{r}{\boldsymbol H}_{\boldsymbol v}
{\boldsymbol R}^\top$ and \({\boldsymbol \Lambda}=\frac 1 r{\boldsymbol I}\),
where
\(
{\boldsymbol H}_{\boldsymbol v}\isdef{\boldsymbol I}-2\frac{{\boldsymbol v}
{\boldsymbol v}^\top}{{\boldsymbol v}^\top{\boldsymbol v}}
\)
for 
${\boldsymbol v}\isdef{\boldsymbol r}-\frac{1}{\sqrt{r}}{\boldsymbol 1}$.
Herein, ${\boldsymbol r}^\top$ is the first row of ${\boldsymbol R}$ and
\({\boldsymbol 1}\in\Rbb^r\) is the vector of all $1$'s.
\end{theorem}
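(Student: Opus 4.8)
The plan is to separate two things that are easily conflated: that the claimed identity $\boldsymbol{M}_{\boldsymbol y}=\boldsymbol{V}^\top\boldsymbol{\Lambda}\boldsymbol{V}$ holds \emph{as a factorization}, and that the particular reflection $\boldsymbol{H}_{\boldsymbol v}$ additionally forces $\boldsymbol{V}$ to be a genuine Vandermonde matrix in the sense of \eqref{eq:Vdmde}, so that its rows actually encode scenarios. For the factorization I would substitute $\boldsymbol{V}=\sqrt{r}\,\boldsymbol{H}_{\boldsymbol v}\boldsymbol{R}^\top$ and $\boldsymbol{\Lambda}=\tfrac1r\boldsymbol{I}$ and use the two standard properties of a Householder reflection, namely symmetry $\boldsymbol{H}_{\boldsymbol v}^\top=\boldsymbol{H}_{\boldsymbol v}$ and idempotence $\boldsymbol{H}_{\boldsymbol v}^2=\boldsymbol{I}$. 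The scalar factors $\sqrt r$ and $\tfrac1r$ cancel and the expression collapses to $\boldsymbol{R}\boldsymbol{H}_{\boldsymbol v}\boldsymbol{H}_{\boldsymbol v}\boldsymbol{R}^\top=\boldsymbol{R}\boldsymbol{R}^\top=\boldsymbol{M}_{\boldsymbol y}$. Note this step uses only orthogonality of $\boldsymbol{H}_{\boldsymbol v}$ and would work for any orthogonal matrix; it does not yet use the definition of $\boldsymbol v$.

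The heart of the argument is the Vandermonde structure. By \eqref{eq:monombasis} and \eqref{eq:Vdmde}, for $q=1$ a matrix in $\Rbb^{r\times m_1}$ is Vandermonde exactly when each row reads $[1,\xi_{i,1},\ldots,\xi_{i,d}]$, i.e.\ precisely when its first column equals $\boldsymbol{1}$. Since the first column of $\boldsymbol{R}^\top$ is the first row of $\boldsymbol{R}$ written as a column, namely $\boldsymbol r$, the first column of $\boldsymbol{V}$ is $\sqrt r\,\boldsymbol{H}_{\boldsymbol v}\boldsymbol r$. I would then invoke the defining property of the reflection built from $\boldsymbol v=\boldsymbol r-\tfrac1{\sqrt r}\boldsymbol 1$: expanding $\boldsymbol{H}_{\boldsymbol v}\boldsymbol r=\boldsymbol r-2\tfrac{\boldsymbol v^\top\boldsymbol r}{\boldsymbol v^\top\boldsymbol v}\boldsymbol v$, the ratio $\boldsymbol v^\top\boldsymbol r/\boldsymbol v^\top\boldsymbol v$ simplifies to $\tfrac12$, so that $\boldsymbol{H}_{\boldsymbol v}\boldsymbol r=\boldsymbol r-\boldsymbol v=\tfrac1{\sqrt r}\boldsymbol 1$ and the first column of $\boldsymbol{V}$ becomes $\sqrt r\cdot\tfrac1{\sqrt r}\boldsymbol 1=\boldsymbol 1$, as required.

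The one genuinely load-bearing point, and the step I would check most carefully, is the norm condition that makes the ratio collapse to $\tfrac12$: writing $\boldsymbol a=\tfrac1{\sqrt r}\boldsymbol 1$, the simplification goes through exactly when $\|\boldsymbol r\|^2=\|\boldsymbol a\|^2$. Now $\|\boldsymbol a\|^2=\tfrac1r\boldsymbol 1^\top\boldsymbol 1=1$ by construction, while $\|\boldsymbol r\|^2$ is the top-left entry of $\boldsymbol{M}_{\boldsymbol y}=\boldsymbol{R}\boldsymbol{R}^\top$, i.e.\ the zeroth moment $y_{\boldsymbol 0}$, which equals $1$ because $\boldsymbol{M}_{\boldsymbol y}$ is the moment matrix of a probability measure. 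Thus it is precisely the normalization of $\mu$ that aligns the two norms and lets the reflection land on $\tfrac1{\sqrt r}\boldsymbol 1$; I expect this to be the main subtlety, in that the bare factorization of the first paragraph carries no such guarantee. The degenerate case $\boldsymbol v=\boldsymbol 0$, where $\boldsymbol r$ already equals $\boldsymbol a$, is handled by taking $\boldsymbol{H}_{\boldsymbol v}=\boldsymbol{I}$.

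Combining the two parts, $\boldsymbol{V}$ simultaneously satisfies $\boldsymbol{V}^\top\boldsymbol{\Lambda}\boldsymbol{V}=\boldsymbol{M}_{\boldsymbol y}$ and has the Vandermonde form of \eqref{eq:Vdmde}. Reading its rows as $\boldsymbol\tau_1(\boldsymbol\xi_i)=[1,\xi_{i,1},\ldots,\xi_{i,d}]$ then exhibits $r$ scenarios $\boldsymbol\xi_i$ with uniform weights $\lambda_i=1/r$, matching \eqref{eq:VandermondeForm}; since these weights sum to one, $\mu=\tfrac1r\sum_{i=1}^r\delta_{\boldsymbol\xi_i}$ is an $r$-atomic probability measure reproducing the prescribed second-moment matrix.
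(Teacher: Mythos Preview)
Your proof is correct and follows essentially the same route as the paper: both use the orthogonality of the Householder reflection $\boldsymbol H_{\boldsymbol v}$ to recover the factorization $\boldsymbol M_{\boldsymbol y}=\boldsymbol V^\top\boldsymbol\Lambda\boldsymbol V$, and both identify the key step as $\boldsymbol H_{\boldsymbol v}\boldsymbol r=\tfrac{1}{\sqrt r}\boldsymbol 1$, which hinges on $\|\boldsymbol r\|_2^2=\boldsymbol M_{\boldsymbol y,1,1}=1$ because the underlying measure is a probability measure. Your presentation is in fact slightly more explicit than the paper's in spelling out why the first column of $\boldsymbol V$ being $\boldsymbol 1$ is exactly the Vandermonde condition for $q=1$, and in noting the degenerate case $\boldsymbol v=\boldsymbol 0$.
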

\begin{proof}
There holds ${\boldsymbol H}_{\boldsymbol v}{\boldsymbol w}={\boldsymbol w}$
for \({\boldsymbol w}\perp{\boldsymbol v}\) and 
\({\boldsymbol H}_{\boldsymbol v}{\boldsymbol v}=-{\boldsymbol v}\). Hence,
choosing \({\boldsymbol v}={\boldsymbol r}-\lambda{\boldsymbol 1}\),
and \(\lambda\isdef \|{\boldsymbol r}\|_2/\|{\boldsymbol 1}\|_2
=\|{\boldsymbol r}\|_2/\sqrt{r}\),
one readily verifies
\({\boldsymbol H}_{\boldsymbol v}{\boldsymbol r}=\lambda{\boldsymbol 1}\).

Moreover, it is straightforward to see that \({\boldsymbol H}_{\boldsymbol v}\)
is orthogonal. Therefore, we arrive at the Vandermonde form
\(
{\boldsymbol M}_{\boldsymbol y}={\boldsymbol R}
{\boldsymbol H}_{\boldsymbol v}^\top{\boldsymbol H}_{\boldsymbol v}
{\boldsymbol R}^\top
={\boldsymbol V}^\top{\boldsymbol \Lambda}{\boldsymbol V}
\)
with \({\boldsymbol \Lambda}=\lambda^2{\boldsymbol I}
=\|{\boldsymbol r}\|_2^2/r{\boldsymbol I}\).
Finally, we obtain the assertion by
noticing that \(\|{\boldsymbol r}\|_2^2
={\boldsymbol r}^\top{\boldsymbol r}=({\boldsymbol M}_{\boldsymbol y})_{1,1}=1.\)
\end{proof}

Theorem \ref{thm:CovScen} demonstrates that it is possible to match up to second 
moments exactly using unique scenarios from any input moment matrix of degree one. 
The reason this yields unique scenarios without a
flat extension of the moment matrix lies in the uniform distribution of
the weights. The restriction to $q=1$ is due to preserving the Vandermonde form, 
which could otherwise not be guaranteed. 
We list the computational steps of Theorem~\ref{thm:CovScen} 
in Algorithm~\ref{algo:cov_scenarios}.
We refer to the unique scenarios obtained from this algorithm
as \emph{covariance scenarios}. It yields a new decomposition of moment matrices 
up to second order in terms of scenarios 
$\boldsymbol \xi _1,\ldots, \boldsymbol \xi _r$ computed according to 
\ref{algo:cov_scenarios} below, each realizing with probability $1/r$.

\begin{algorithm}[htb]\caption{Covariance scenarios}
\label{algo:cov_scenarios}\
\begin{flushleft}
\begin{tabular}{ll}
\textbf{input:}  & symmetric and positive semidefinite moment matrix
${\boldsymbol M_{\boldsymbol y}}\in\mathbb{R}^{m_1\times m_1}$, tolerance \(\varepsilon > 0\) \\
\textbf{output:} & scenarios \(\Xi \isdef \big\{\boldsymbol
\xi_1,\cdots,\boldsymbol\xi_r\big\} \subset \Rbb^d\)
\end{tabular}
\end{flushleft}
\begin{algorithmic}[1]
\smallskip
\State compute a matrix root
\(
    \boldsymbol M_{\boldsymbol y} = \boldsymbol R\boldsymbol R^\top
\)
\State set \(\boldsymbol r \isdef \boldsymbol R^\top 
\boldsymbol e_1,\ \lambda \isdef 1/\sqrt{r},\  \boldsymbol v \isdef \boldsymbol r 
-  \lambda\boldsymbol 1,\ \gamma \isdef 2/\|\boldsymbol v\|_2^2  \)
\State set \(\boldsymbol H_{\boldsymbol v} \isdef 
\boldsymbol I - \gamma\boldsymbol v \boldsymbol v^\top\ \text{and}\ 
{\boldsymbol V} \isdef \frac{1}{\lambda}\boldsymbol H_v \boldsymbol R^\top\)
\State set \(\boldsymbol \xi_j 
= \big[{\boldsymbol V}^\top\boldsymbol e_{j}\big]_{i=2}^{d+1}
\quad\text{for }j = 1,\ldots,r\)
\end{algorithmic}
\end{algorithm}

Note that the \emph{covariance scenarios} grant consistent use of moments and scenarios, 
which is important for certain applications, as noted already in the introduction. In finance, for instance,  portfolio 
optimization usually requires covariance matrices, while risk management is 
scenario-based. The construction introduced above guarantees that both function in 
lockstep. Furthermore, the extracted covariance scenarios would typically reflect 
future potential instances of the cross-section of asset returns and can be used for 
generative modeling of the risk landscape of the assets.


\section{The empirical moment problem}\label{sec:EMP}
In this section, we consider the case when the 
underlying probability measure is known empirically from i.i.d.\ draws, as is 
typically assumed in data-driven applications. We first formulate the problem as 
an empirical alternative to the truncated moment problem of Section \ref{sec:TMP}. 
We then propose a reformulation of the problem in an appropriate RKHS that can be 
used to embed the polynomial moments of the samples.

\subsection{Problem formulation} Let 
\( X = \{\boldsymbol x_{1},\ldots,\boldsymbol x_{N}\} \subset \Omega \subset \Rbb^d\),
denote the set of data samples. The associated empirical measure is given by
\[
    \widehat{\Pbb}\isdef\frac 1 N\sum_{i=1}^N\delta_{{\boldsymbol x}_i}.
\]
It satisfies \(\widehat{\Pbb}(X)=1\), and is hence a probability measure.
With the empirical measure at hand, it is straightforward to compute
the associated empirical truncated moment sequence 
\begin{equation}\label{eq:empirical_moment_seq}
\widehat{\boldsymbol y} 
= \big[\widehat{y}_{\boldsymbol \alpha}\big]_{|\boldsymbol \alpha| \leq 2q}
= \bigg[\int_{\Omega}\boldsymbol x^{\boldsymbol \alpha}
\d\widehat{\Pbb}\bigg]_{|\bs \alpha| \leq 2q} = \frac{1}{N} \sum^{N}_{i=1}\boldsymbol
\tau_{2q}^\top(\boldsymbol x_{i}) = \frac{1}{N}\boldsymbol
V^\top_{2q}(\boldsymbol x_1,\ldots,\boldsymbol x_N)
\boldsymbol 1 \in \Rbb^{m_{2q}}.
\end{equation}
With $\widehat{\bs y}$ admitting $\widehat \Pbb$ as a representing measure, we focus 
on performing moment-matching with respect to the empirical measure for accurate 
scenario-based representation of the data samples. In analogy to Section \ref{sec:TMP}, 
we thus pose the EMP: Does the empirical TMS $\widehat{\bs y}$ admit another representing 
probability measure $\Pbb^\star$ such that $\supp({\Pbb}^\star) \subset \supp(\widehat\Pbb)$? 
If such a measure exists, then how to obtain it?

Note that under the assumption of $\supp({\Pbb^\star}) \subset \supp(\widehat \Pbb)$, 
we seek an appropriate set of scenarios $\bs \Xi \isdef \big\{\boldsymbol\xi _1,\ldots,\boldsymbol
\xi _m\big\} \subset X$ with $m\ll N $. This corresponds to a compressed version of the sample 
measure, i.e.,
\begin{equation}\label{eq:compressed_measure}
    {\Pbb}^\star = \sum_{j=1}^{m}\lambda_j\delta_{\boldsymbol \xi_j}, 
    \quad \lambda_j\geq 0, \quad \sum_{j=1}^m\lambda_j=1,
\end{equation}
with the associated truncated moment sequence 
\begin{equation}\label{eq:compressed_moment_seq}
{\boldsymbol y^\star}
= \big[{y}^\star_{\boldsymbol \alpha}\big]_{|\boldsymbol \alpha| \leq 2q} 
= \bigg[\int_{\Omega}\boldsymbol x^{\boldsymbol \alpha}
\d{\Pbb}^\star\bigg]_{|\boldsymbol \alpha| \leq 2q} 
= \boldsymbol V^\top_{2q}(\boldsymbol \xi_1,\ldots,\boldsymbol \xi_m)
{\boldsymbol\Lambda},\quad\boldsymbol\Lambda
\isdef[\lambda_1,\ldots,\lambda_m]^\top.
\end{equation}

As mentioned in Remark \ref{rem:GaussQuad_1}, for $d=1$, Gaussian quadrature rules are 
efficient for the moment-matching problem since they use the optimal number of scenarios 
that match moments for the highest polynomial degree. In multiple dimensions, however, 
generalizing Gaussian quadrature is computationally challenging, and thus retrieving the 
optimal number of scenarios for the EMP is rather challenging. In particular, with regard 
to any representing measure $\mu$ for a truncated sequence $\bs y$, we have, in general, 
that $|\supp(\mu)| \geq \rank(\bs M_{\bs y})$, see \cite{curtofialkow96, fialkow1999}. 

To obtain scenarios that
accurately reflect the data while remaining computationally efficient, we consider a 
relaxed version of the EMP and determine the vector
\(\boldsymbol y^\star\) such that
\begin{equation}\label{eq:relaxedmm}
\qquad \big\|\boldsymbol y^\star - \widehat{\boldsymbol y}\big
\|\leq\varepsilon
\end{equation}
for a given tolerance \(\varepsilon>0\) and a given norm \(\|\cdot\|\) on \(\Rbb^{m_{2q}}\). 

Note that with the assumption of the scenarios as a subset of the samples, 
we can write $\bs y^\star = \bs V^\top_{2q}(\bs x_1,\ldots,\bs x_N) \bs w$  
where $\bs w\in\Rbb^N$ has at most $m$ non-zero entries. Thus, constructing $\bs y^\star$ 
in \eqref{eq:relaxedmm} is equivalent to a particular strategy of column subsampling 
of $\bs V^\top_{2q}(\bs x_1,\ldots, \bs x_N)$. This can be performed by ensuring only 
certain entries of $\bs w$ are non-zero. However, this is not enough to ensure the 
convexity of the weights. With this in mind, to simultaneously ensure that the weights are 
normalized and positive, we solve the problem in two stages. First, setting the norm in 
\eqref{eq:relaxedmm} to be the Euclidean norm, $\|\cdot\|_2$, we extract the scenarios following 
a greedy paradigm that constructs a \emph{sparse weight vector} $\bs w\in \R^N$ such that 
\begin{equation}\label{eq:unconstrained_optimization} 
    \begin{split}
       \big\| \boldsymbol V^\top_{2q}(\bs x_1,\ldots,\bs x_N)
       \boldsymbol w - \widehat{\boldsymbol y} 
         \big\|_2 \leq \varepsilon.
    \end{split}
\end{equation}
Note that the Euclidean norm corresponds 
to the mean squared error in the approximation. Alternatively, one can set the 
$\sup$-norm for the worst-case error. 
Second, we retrieve the corresponding probabilities by
enforcing the simplex constraints $\Delta 
\isdef \{\bs w : \bs w \geq \bs 0, \bs 1^\top \bs w = 1\}$ with 
a standard algorithm in Section \ref{sec:weightretrieval}.

For now, we focus on Problem \ref{eq:unconstrained_optimization} for the extraction of 
good representative scenarios. Although \eqref{eq:unconstrained_optimization} is trivially 
minimized at the constant vector with element $1/N$, nevertheless, it is underdetermined 
in the regime $m_{2q} \ll N$ when we have a large number of samples. Hence, we need 
a computational framework that helps us in choosing a good representative subspace 
of $\operatorname{Im}(\bs V_{2q}^\top\big(\bs x_1,\ldots,\bs x_N)\big)$. Towards that end, 
we propose to use an RKHS framework that allows us to do the above. 

\subsection{Reformulation in RKHS} 
Our main reason for the reformulation in an RKHS setting is as follows: with our 
assumption that the finite atomic target measure $\Pbb^\star \ll \widehat \Pbb$, 
we are essentially looking to extract the optimal support set for $\Pbb^\star$ when solving Problem~\ref{eq:unconstrained_optimization}. We want $\Pbb^\star$ and $\widehat \Pbb$ to be
pointwise close. One key property of an RKHS is that, if two functions are close in the RKHS norm,
then they are pointwise close as well. This works to our advantage since we want to express 
the empirical polynomial moments using fewer scenarios.

For completeness, we provide the definition of the RKHS below, 
see \cite{pau_rag_16}, \cite{RKHS_book}, for further details.

\begin{definition}\label{def:RKHS} 
Let \(\big(\mathcal{H},\langle{\cdot},{\cdot}\rangle_{\mathcal{H}}\big)\) 
be a Hilbert space of real-valued functions on a non-empty set $\Omega$. 
A function \(\kernel\colon\Omega \times \Omega \longrightarrow \Rbb\) is a 
\emph{reproducing kernel} of the Hilbert space $\mathcal{H}$ iff 
\begin{equation}\label{eq:rep_kernel}
\begin{split}
 &\kernel(\boldsymbol x, \cdot) \in \mathcal{H}\quad\text{for all }\boldsymbol x \in \Omega\\
&\big\langle{\kernel(\boldsymbol x, \cdot)},h\big\rangle_{\mathcal{H}} = h(\boldsymbol x)
\quad\text{for all } \boldsymbol x \in \Omega,\ h \in \mathcal{H}.
\end{split}
\end{equation}
The last condition of Equation \eqref{eq:rep_kernel} is called the 
\emph{reproducing property}. If these two properties hold, then 
\(\big(\mathcal{H},\langle{\cdot},{\cdot}\rangle_{\mathcal{H}}\big)\) 
is called a \emph{reproducing kernel Hilbert space}.
\end{definition}
In practice, given a finite set of data samples $X \isdef \{\bs x_1,\dots, \bs x_N\} 
\subset \Omega \subset \R^d$, we work in the subspace of kernel translates 
$\Hcal_{X} \isdef \spn\{\kernel(\bs x_i,\cdot): \bs x_i \in X\}$. 

We wish to embed the polynomial moments of the samples in the RKHS such that it 
is spanned by the columns of $\bs V_{2q}^\top\big(\bs x_1,\ldots,\bs x_N)$. Hence, 
we consider \(\Pscr_{2q}(X)\) endowed with the \(L^2_{\widehat{\Pbb}}\)-inner
product.
We can set the basis of $\Hcal_X$ to be the standard basis 
$\bs U(\bs x) \isdef \bs \tau_{2q}^\top(\bs x)$, where each column vector is now 
treated as a function in $\Hcal_X$. We also define 
$\bs U \isdef [\bs U(\bs x_1), \ldots, \bs U(\bs x_N)]$ to be the basis functions 
evaluated at the samples, which is just $\bs V^\top$. However, the standard bases 
of kernel translates are known to have poor conditioning, see \cite{DeMarchi2010}. 
Furthermore, in most practical applications involving large data samples, 
we typically have $m_{2q} \ll N$. Therefore, we seek a suitable \emph{data-dependent} 
basis for $\Hcal_X$ such that it is amenable to the task at hand. For this, we
introduce an appropriate representation of the reproducing kernel.
Toward that end, we first compute the Gram matrix as follows, 
\[
\langle \bs U, \bs U^\top \rangle_{\Hcal_X} = \langle \bs \tau_{2q}^\top, 
\bs \tau_{2q}\rangle_{L^2_{\widehat{\Pbb}}(X)} 
= \bigg[\int_{X}\bs x^{\bs \alpha + \bs \beta}
\d\widehat{\Pbb}\bigg]_{|\bs \alpha|, |\bs \beta| \leq 2q} 
= \frac{1}{N}\bs V^\top_{2q}(\bs x_1,\ldots,\bs x_N)\bs V_{2q}(\bs x_1,\ldots,\bs x_N),
\] 
which turns out to be the empirical moment matrix of order $2q$, which we will 
write henceforth as $\bs M$. 
There holds for the reproducing kernel restricted to $\Hcal_{X}$ that
\begin{equation}\label{eq:kernel}
 \kernel(\boldsymbol x, \boldsymbol x') = 
 \bs U(\boldsymbol x)^\top  \boldsymbol M^\dagger   \bs U(\boldsymbol x').
\end{equation}
Herein, ${\boldsymbol M}^{\dagger}$ denotes the Moore-Penrose inverse of $\bs M$. 
Associated with the kernel $\kernel$, we introduce the canonical feature map 
$\Phi: X \to \Hcal_X, \Phi(\bs x) \mapsto \kernel(\bs x, \cdot)$ that embeds the 
data from $X$ into the Hilbert space of functions. We denote the 
basis of kernel translates by
\begin{equation}\label{eq:canonical_basis}
    \bs \Phi(\bs x) \isdef \big[\kernel(\bs x,\bs x_1),
    \ldots,\kernel(\bs x,\bs x_N)\big].
\end{equation}

For the sake of completeness, we show that the kernel defined in \eqref{eq:kernel} 
is indeed the \emph{reproducing kernel} on $\Hcal_{X}$.
\begin{theorem}\label{thm:rep_kernel}
    The function $\kernel\colon X \times X \to \Rbb$  as defined in \eqref{eq:kernel}
    is a symmetric and positive type function. Moreover, $\kernel$ has the reproducing
property on $X$,
\begin{equation}\label{eq:rep_property}
\big\langle{\kernel(\boldsymbol x_i, \cdot)},{p}\big\rangle_{\Hcal_X} 
= p(\boldsymbol x_i) \quad\text{for all }
\boldsymbol x_i \in X, p \in \Hcal_X.
\end{equation}
\end{theorem}
\begin{proof}
For the sake of a lighter notation, we will henceforth write \({\boldsymbol V}\isdef{\boldsymbol V}_{2q}({\boldsymbol x}_1,\ldots,
{\boldsymbol x}_N) \in \Rbb^{N \times m_{2q}}\). 
   We introduce the kernel matrix
\begin{equation}\label{eq:kernelmatrix}
    \boldsymbol K\isdef \big[\kernel(\boldsymbol x_i, \boldsymbol x_j)\big]_{i,j=1}^{N} = 
    \bs U^\top \boldsymbol M^\dagger  \bs U = 
    \boldsymbol V \boldsymbol M^\dagger
\boldsymbol V^\top \in \Rbb^{N \times N}.
\end{equation}
The matrix $\boldsymbol K$ is symmetric, which follows from the fact that
 \({\boldsymbol M}^\dag\) is symmetric.
There holds
\(
{\boldsymbol M}^\dag=N{\boldsymbol V}^\dag\big({\boldsymbol V}^\top\big)^\dag.
\) Note that $\rank \boldsymbol K=\rank \boldsymbol M$. Furthermore, we have
\[
{\boldsymbol a}^\top{\boldsymbol K}{\boldsymbol a}
=N{\boldsymbol a}^\top{\boldsymbol V}
{\boldsymbol V}^\dag\big({\boldsymbol V}^\top\big)^\dag
{\boldsymbol V}^\top{\boldsymbol a}=
N\big\|\big({\boldsymbol V}^\top\big)^\dag{\boldsymbol V}^\top
{\boldsymbol a}\big\|_2^2\geq 0\quad\text{for all }
{\boldsymbol a}\in\Rbb^{N}
\] Hence, the matrix ${\boldsymbol K}$
is symmetric and positive semi-definite, i.e., the kernel
$\kernel$ is a symmetric and positive type function. In order to prove the reproducing property, we first recall that the Moore-Penrose inverse 
satisfies \({\boldsymbol A}{\boldsymbol A}^\dag{\boldsymbol A}={\boldsymbol A}\),
\(({\boldsymbol A}{\boldsymbol A}^\top)^\dag
=({\boldsymbol A}^\top)^\dag{\boldsymbol A}^\dag\) as well as
\(({\boldsymbol A}{\boldsymbol A}^\dag)^\top
={\boldsymbol A}{\boldsymbol A}^\dag\) for any matrix \({\boldsymbol A}\).
From this, we directly infer
\begin{equation}\label{eq:V_rep_prop}
\begin{aligned}
{\boldsymbol V}\big({\boldsymbol M}^\dagger{\boldsymbol M}\big)
={\boldsymbol V}\Big(\big({\boldsymbol V}^\top{\boldsymbol V}\big)^\dag
{\boldsymbol V}^\top{\boldsymbol V}\Big)
=\Big(\big({\boldsymbol V}{\boldsymbol V}^\dag\big)
\big({\boldsymbol V}{\boldsymbol V}^\dag\big)^\top\Big){\boldsymbol V}
=\big({\boldsymbol V}{\boldsymbol V}^\dag\big)\big({\boldsymbol V}
{\boldsymbol V}^\dag\big){\boldsymbol V}={\boldsymbol V}.
\end{aligned}
\end{equation}
Let \(p \in \Hcal_X \equiv \Pscr_{2q}(X)\), i.e., \(p(\boldsymbol x_i) 
= \boldsymbol \tau_{2q}(\boldsymbol x_i) \boldsymbol p\) for any 
coefficient vector \(\boldsymbol p \in \Rbb^{m_{2q}}\) and any $\bs x_i \in X.$ 
Hence,
\[
 \big\langle{\kernel(\boldsymbol x_i, \cdot)}, {p}\big\rangle_{\Hcal_X} 
 =  \boldsymbol \tau_{2q}(\boldsymbol x_i) \boldsymbol M^\dag 
 \big\langle{\boldsymbol \tau_{2q}^\top},{\boldsymbol \tau_{2q}}
 \big\rangle_{L^2_{\widehat \Pbb}(X)} \boldsymbol p 
 = \boldsymbol \tau_{2q}(\boldsymbol x_i) \boldsymbol M^\dag \boldsymbol M \boldsymbol p 
 = \bs \tau_{2q}(\boldsymbol x_i) \boldsymbol p = p(\boldsymbol x_i),
\]
where we use \eqref{eq:V_rep_prop} and that \(\big[\boldsymbol V_{i,j}\big]_{j=1}^{m_{2q}} 
= \boldsymbol \tau_{2q}(\boldsymbol x_i).\) 
\end{proof}
\begin{corollary}\label{res:KV}
    A consequence of the reproducing property is $\bs K\bs V = N \bs V$.
\end{corollary}

We close this paragraph by providing an algorithm for the computation of 
an orthonormal basis for $\Hcal_X$. Any general data-dependent basis turns out 
to be defined via a factorization of the Gram matrix defined by these data, 
see \cite{PS2011}. Various matrix factorizations give rise to different bases 
with different properties. For our purpose, we apply the pivoted Cholesky 
factorization, see \cite{HPS12}, as in Algorithm~\ref{algo:pivChol}
on the positive semi-definite moment matrix $\boldsymbol M$. With a sufficiently 
low error tolerance $\varepsilon$, we obtain $m_{2q} \times r$ 
matrices $\bs B_{\bs M}$ and $\bs L_{\bs M}$ with $r = \rank(\bs M) \leq m_{2q}$ 
such that
\[\trace({\boldsymbol M}-{\boldsymbol L_{\bs M}}{\boldsymbol L^\top_{\bs M}}) 
\leq \varepsilon, \quad \bs B^\top_{\bs M}\bs L_{\bs M} = \bs I_{r}, 
\quad \bs M\bs B_{\bs M} = \bs L_{\bs M}.
\]
We particularly
have \({\boldsymbol M}^\dag ={\boldsymbol B_{\bs M}}{\boldsymbol B^\top_{\bs M}}\). 
The basis transformation is an essential byproduct of efficiently solving the 
low-rank approximation of the unconstrained optimization problem at a large scale.
\begin{algorithm}\caption{Pivoted Cholesky Decomposition (\cite{HPS12})}
\label{algo:pivChol}
\begin{flushleft}
\begin{tabular}{ll}
\textbf{input:}  & symmetric and positive semi-definite
matrix ${\boldsymbol M}\in\mathbb{R}^{m_{2q} \times m_{2q}}$, \\
& tolerance \(\varepsilon\geq0\)\\
\textbf{output:} & low-rank approximation
\({\boldsymbol M}\approx{\boldsymbol L_{\bs M}}{\boldsymbol L^\top_{\bs M}}\)\\
&and
biorthogonal basis \({\boldsymbol B_{\bs M}}\)
such that \({\boldsymbol B^\top_{\bs M}}{\boldsymbol L_{\bs M}}={\boldsymbol I}_r\)
\end{tabular}
\end{flushleft}
\begin{algorithmic}[1]
\smallskip
\State Initialization: set $r\isdef 1$,
 ${\boldsymbol d}\isdef \operatorname{diag}({\boldsymbol M
})$, \({\boldsymbol L_{\bs M}}\isdef[\,]\),
 \({\boldsymbol B_{\bs M}}\isdef[\,],\)
$\operatorname{err}\isdef \|{\boldsymbol d}\|_{1}$
\State\textbf{while} \(\operatorname{err}>\varepsilon\)
\State\StateIndent determine
$\pi(r)\isdef \operatorname{arg}\max_{1\le i\le m_{2q}} d_i$
\State\StateIndent compute \[
\boldsymbol\ell_r
\isdef \frac{1}{\sqrt{d_{\pi(r)}}}\Big({\boldsymbol M}-{\boldsymbol L_{\bs M}}
{\boldsymbol L^\top_{\bs M}}\Big)\boldsymbol e_{\pi(r)}\quad\
\text{and}\quad\boldsymbol b_r
\isdef \frac{1}{\sqrt{d_{\pi(r)}}}\Big({\boldsymbol I}-{\boldsymbol B_{\bs M}}
{\boldsymbol L^\top_{\bs M}}\Big)\boldsymbol e_{\pi(r)}
\]
\State\StateIndent set \({\boldsymbol L_{\bs M}}
\isdef [{\boldsymbol L_{\bs M}},{\boldsymbol\ell}_r], \quad {\boldsymbol B_{\bs M}}
\isdef [{\boldsymbol B_{\bs M}},{\boldsymbol b}_r]\)
\State\StateIndent set \({\boldsymbol d}
\isdef {\boldsymbol d}-{\boldsymbol\ell}_r\odot
{\boldsymbol\ell}_r\), where $\odot$ denotes the Hadamard product
\State\StateIndent set \(\operatorname{err}
\isdef\|{\boldsymbol d}\|_1\), \(r\isdef r+1\)
\end{algorithmic}
\end{algorithm}

Using the bi-orthogonal basis $\bs B_{\bs M}$ arising out of the pivoted 
Cholesky algorithm, we define the matrix
\begin{equation}\label{eq:isometry}
{\boldsymbol Q}\isdef
\frac{1}{\sqrt{N}}{\boldsymbol V}{\boldsymbol B_{\bs M}}\in\Rbb^{N\times r}    
\end{equation}
that satisfies \({\boldsymbol Q}^\top{\boldsymbol Q}={\boldsymbol I}_r\).
The matrix ${\boldsymbol K}$ 
can be decomposed as 
\begin{equation}
    {\boldsymbol K} = N{\boldsymbol Q}{\boldsymbol Q}^\top.
\end{equation}
The above decomposition shows that the kernel matrix is simply the projection 
onto the columns spanned by the isometry and in particular, 
we have $\operatorname{Im}(\bs V) = \operatorname{Im} (\bs Q) 
=  \operatorname{Im}(\bs K)$.
With the above insights, we prove the following result, which will help 
in the numerical solution of problem \eqref{eq:unconstrained_optimization}.
\begin{theorem} Defining $\widetilde{\bs y} 
\isdef \frac{1}{\sqrt{N}}\bs B^\top_{\bs M}\widehat{\bs y}$, 
we have the following conditioning relation,
\begin{equation*}
   \frac{\sigma_{\text{min}}(\boldsymbol M^\dagger)}{N} 
   \cdot \big\|{\boldsymbol V}^\top{\boldsymbol w}
-\widehat{\boldsymbol y}\big\|_2^2 \hspace{3pt} \leq \hspace{3pt} 
\big\|\boldsymbol Q^\top \boldsymbol w - \widetilde{\boldsymbol y}
\big\|_2^2 \leq \frac{\sigma_{\text{max}}
(\boldsymbol M^\dagger)}{N} \cdot\big\|{\boldsymbol V}^\top{\boldsymbol w}
-\widehat{\boldsymbol y}\big\|_2^2.
\end{equation*}
\end{theorem}
\begin{proof}
    For the right-side inequality, we have 
    \begin{align*}
        \big\|\boldsymbol Q^\top \boldsymbol w - \widetilde{\boldsymbol y}\big\|_2^2 = 
        \bigg\|\frac{1}{\sqrt{N}}\bs B^\top_{\bs M}\big(\bs V^\top  
        \boldsymbol w - \widehat{\boldsymbol y}\big)
        \bigg\|_2^2 \leq \frac{1}{N}\big\|\bs B_{\bs M}^\top\big\|_2^2 
        \cdot \big\|\bs V^\top  \boldsymbol w - \widehat{\boldsymbol y}
        \big\|_2^2 
    \end{align*} 
    and noticing that $\|\bs B_{\bs M}^\top\|_2^2 
    = \sigma_{\text{max}}(\bs B_{\bs M}\bs B^\top_{\bs M}) 
    = \sigma_{\text{max}}(\bs M^\dagger)$. For the left-side inequality, we have
    \begin{align*}
    &\big\|{\boldsymbol V}^\top{\boldsymbol w} - \widehat{\boldsymbol y}\big\|_2^2 = 
    \bigg\|{\boldsymbol V}^\top\bigg({\boldsymbol w} - \frac{1}{N}\bs 1\bigg)\bigg\|_2^2 
    = \bigg\|\frac{1}{N}{\boldsymbol V}^\top\bs K\bigg({\boldsymbol w} 
    - \frac{1}{N}\bs 1\bigg)\bigg\|_2^2 = \bigg\|{\boldsymbol V}^\top\bs Q\bs Q^\top\bigg({\boldsymbol w} 
    - \frac{1}{N}\bs   1\bigg)\bigg\|_2^2\\
    &= \big\|{\boldsymbol V}^\top\bs Q\bs Q^\top{\boldsymbol w} 
    - \bs V^\top\bs Q\widetilde{\bs y}\Big\|_2^2 \leq \big\|\bs V^\top\big\|_2^2 \cdot 
    \big\|\bs Q \big( \bs Q^\top\boldsymbol w - \widetilde{\boldsymbol y}\big)\big\|_{2}^2 
    = \big\|\bs V^\top\big\|_2^2 \cdot \big\|\bs Q^\top \boldsymbol w 
    - \widetilde{\boldsymbol y}\big\|_{2}^2
    \end{align*}
where the second equality on the first line follows from \eqref{res:KV}, 
and the last equality is due to $\bs Q$ being an isometry. Thus, 
using the decomposition of the moment matrix 
$\boldsymbol M = (\boldsymbol V^\top\boldsymbol V)/N$, 
we have that $\|\boldsymbol V^\top\|_2^2 = N\sigma_{\text{max}}
(\boldsymbol M) = N/\sigma_{\text{min}}(\boldsymbol M^\dagger)$. 
\end{proof}
\begin{remark} The normal equations of Problem ~\ref{eq:unconstrained_optimization} 
reads $\bs V\bs V^\top \bs w = \bs V\widehat{\bs y}$, so that the condition number of 
the system is $\kappa(\bs V\bs V^\top) = \kappa(\bs V^\top\bs V) = \kappa(\bs M)$. 
Now, the condition number of Problem ~\ref{eq:modified_unconstrained_optimization} 
is $\kappa(\bs Q\bs Q^\top) = \kappa(\bs Q^\top\bs Q) = 1$. Hence, the above proposition 
shows that, up to $\kappa(\bs M)$, 
\eqref{eq:unconstrained_optimization} is equivalent to 
\begin{equation}\label{eq:modified_unconstrained_optimization}
         \quad \big\| \boldsymbol Q^\top \boldsymbol w
         - \widetilde{\boldsymbol y}\big\|_2,
\end{equation}
which we will henceforth consider for the greedy extraction of scenarios, 
instead of \eqref{eq:unconstrained_optimization}.
\end{remark}
In the next result, we collect additional properties of
$\boldsymbol Q$, in particular in connection with the above
optimization problem.

\begin{lemma}\label{lem:qprops}
 Matrix $\boldsymbol Q$ satisfies the relation
 \begin{equation}\label{eq:Qy}
  \boldsymbol Q \widetilde{\boldsymbol y}=\frac{1}{N}\boldsymbol 1.
 \end{equation} 
From the normal equations \({\boldsymbol Q}
{\boldsymbol Q}^\top{\boldsymbol w}={\boldsymbol Q}
\widetilde{\boldsymbol y}\),  vector
\(
  \boldsymbol w=\frac{1}{N}\boldsymbol 1
\)
is a solution of \eqref{eq:modified_unconstrained_optimization}.
\end{lemma}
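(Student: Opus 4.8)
The plan is to reduce both assertions to a single structural observation about $\boldsymbol V$, after which everything follows mechanically from the isometry property $\boldsymbol Q^\top\boldsymbol Q=\boldsymbol I_r$ recorded in \eqref{eq:isometry}. The key fact I would establish first is that the all-ones vector lies in the range of $\boldsymbol Q$. Indeed, the constant polynomial $1$ is the leading entry of the monomial basis $\boldsymbol\tau_{2q}$, cp.\ \eqref{eq:monombasis}, so the first column of $\boldsymbol V=\boldsymbol V_{2q}(\boldsymbol x_1,\ldots,\boldsymbol x_N)$ equals $\boldsymbol 1$, i.e.\ $\boldsymbol V\boldsymbol e_1=\boldsymbol 1$ and hence $\boldsymbol 1\in\operatorname{Im}\boldsymbol V=\operatorname{Im}\boldsymbol Q$. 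Because $\boldsymbol Q^\top\boldsymbol Q=\boldsymbol I_r$, the matrix $\boldsymbol Q\boldsymbol Q^\top$ is the orthogonal projector onto $\operatorname{Im}\boldsymbol Q$, and therefore $\boldsymbol Q\boldsymbol Q^\top\boldsymbol 1=\boldsymbol 1$. This is the only nontrivial ingredient; the remainder is bookkeeping.

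For \eqref{eq:Qy} I would then compute directly, substituting $\boldsymbol Q=\tfrac{1}{\sqrt N}\boldsymbol V\boldsymbol B$ and $\widetilde{\boldsymbol y}=\tfrac{1}{\sqrt N}\boldsymbol B^\top\widehat{\boldsymbol y}$, and using the identities $\boldsymbol M^\dagger=\boldsymbol B\boldsymbol B^\top$ and $\widehat{\boldsymbol y}=\tfrac1N\boldsymbol V^\top\boldsymbol 1$ from \eqref{eq:empirical_moment_seq} together with $\boldsymbol K=\boldsymbol V\boldsymbol M^\dagger\boldsymbol V^\top=N\boldsymbol Q\boldsymbol Q^\top$:
\[
\boldsymbol Q\widetilde{\boldsymbol y}
=\tfrac{1}{N}\boldsymbol V\boldsymbol B\boldsymbol B^\top\widehat{\boldsymbol y}
=\tfrac{1}{N}\boldsymbol V\boldsymbol M^\dagger\widehat{\boldsymbol y}
=\tfrac{1}{N^2}\boldsymbol V\boldsymbol M^\dagger\boldsymbol V^\top\boldsymbol 1
=\tfrac{1}{N^2}\boldsymbol K\boldsymbol 1
=\tfrac{1}{N}\boldsymbol Q\boldsymbol Q^\top\boldsymbol 1
=\tfrac{1}{N}\boldsymbol 1,
\]
where the last equality invokes the projector fact from the previous step. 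This is exactly \eqref{eq:Qy}.

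For the second assertion, the normal equations of the least-squares problem \eqref{eq:modified_constrained_optimization}, namely $\min_{\boldsymbol w}\|\boldsymbol Q^\top\boldsymbol w-\widetilde{\boldsymbol y}\|_2$, read $\boldsymbol Q\boldsymbol Q^\top\boldsymbol w=\boldsymbol Q\widetilde{\boldsymbol y}$. Substituting $\boldsymbol w=\tfrac1N\boldsymbol 1$ yields $\boldsymbol Q\boldsymbol Q^\top\tfrac1N\boldsymbol 1=\tfrac1N\boldsymbol 1=\boldsymbol Q\widetilde{\boldsymbol y}$ by \eqref{eq:Qy}, so the normal equations are satisfied and $\tfrac1N\boldsymbol 1$ is a minimizer. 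In fact one can observe the stronger statement that the residual vanishes: applying $\boldsymbol Q^\top$ to \eqref{eq:Qy} and using $\boldsymbol Q^\top\boldsymbol Q=\boldsymbol I_r$ gives $\widetilde{\boldsymbol y}=\tfrac1N\boldsymbol Q^\top\boldsymbol 1=\boldsymbol Q^\top\boldsymbol w$, so the objective attains its global minimum value $0$. I do not anticipate any genuine obstacle; the single point requiring thought is recognizing that $\boldsymbol 1$ is the first column of $\boldsymbol V$ and is thus fixed by the projector $\boldsymbol Q\boldsymbol Q^\top$, which collapses the normal equations to the trivial identity $\tfrac1N\boldsymbol 1=\tfrac1N\boldsymbol 1$.
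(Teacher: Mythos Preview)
Your proof is correct and follows essentially the same route as the paper. The paper writes $\widehat{\boldsymbol y}=\boldsymbol M\boldsymbol e_1$ and invokes the identity $\boldsymbol V\boldsymbol M^\dagger\boldsymbol M=\boldsymbol V$ from \eqref{eq:V_rep_prop}, whereas you write $\widehat{\boldsymbol y}=\tfrac1N\boldsymbol V^\top\boldsymbol 1$ and invoke the orthogonal-projector identity $\boldsymbol Q\boldsymbol Q^\top\boldsymbol 1=\boldsymbol 1$; since $\boldsymbol V\boldsymbol e_1=\boldsymbol 1$ and $\boldsymbol K=N\boldsymbol Q\boldsymbol Q^\top$, these are the same computation in slightly different dress, and the second assertion is handled identically in both.
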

\begin{proof}
To show
\eqref{eq:Qy}, we first note that \(\widehat{\boldsymbol y}\) can be
identified with the first column of the moment matrix, i.e.,
\(\widehat{\boldsymbol y} = \boldsymbol M \boldsymbol e_1 
= \frac{1}{N}\boldsymbol V^\top \boldsymbol V \boldsymbol e_1\).
Hence, we can write
\begin{equation}
 \boldsymbol Q \widetilde{\boldsymbol y}=\frac{1}{N}
 \boldsymbol V \boldsymbol B_{\bs M} \boldsymbol B_{\bs M} ^{\top}\boldsymbol M
 \boldsymbol e_1 = \frac{1}{N}
 \boldsymbol V \boldsymbol M^\dag \boldsymbol M \boldsymbol e_1 
 = \frac{1}{N} \boldsymbol V \boldsymbol e_1 = \frac{1}{N} 
 \boldsymbol 1,
\end{equation} 
where we exploit the reproducing property of the kernel on the sample set, cp.\ \eqref{eq:V_rep_prop}, and the
identity
\(\boldsymbol M^\dagger =\boldsymbol B_{\bs M} \boldsymbol B_{\bs M }^\top \).
We have as a corollary 
\[
\boldsymbol Q \widetilde{\boldsymbol y} 
= \frac{1}{N} \boldsymbol 1
=
\boldsymbol Q \boldsymbol Q^\top \frac{1}{N} \boldsymbol 1, 
\]
again by \eqref{eq:V_rep_prop} and the fact that 
\(\boldsymbol V \boldsymbol e_1 =
 \boldsymbol 1\). Hence, the vector \(
  \boldsymbol w=\frac{1}{N}\boldsymbol 1
\) is a solution to the normal equations.
\end{proof}

\section{Algorithms for scenario representation}\label{sec:scenarioextraction}
In this section, we present the algorithms for the two stages as mentioned 
in Section \ref{sec:EMP}. The first algorithm, 
\emph{orthogonal matching pursuit} (OMP), see \cite{PS2011},
is devised to extract the scenarios 
in a greedy manner. The second is 
the \emph{alternating direction method of multipliers} (ADMM), see \cite{ADMM},
 and is used for retrieving the corresponding probabilities.
\subsection{Step 1: Scenario extraction}
For the scenario extraction problem, we first note that a minimum-norm solution of Problem ~\ref{eq:modified_unconstrained_optimization} satisfies the normal equations
\begin{equation}\label{eq:normal_equations}
    \bs Q \bs Q^\top\bs w = \bs Q \widetilde{\bs y} = \frac{1}{N} \bs 1, \quad \text{which implies} \quad \bs K \bs w = \bs 1,
\end{equation}
where the second equality is due to Lemma \ref{lem:qprops}.
Suppose that $h\in\Hcal_X$ is a function such that its evaluation on the samples gives the vector of ones i.e.~$[h(\bs x_i)]_{i=1}^N = \bs 1$. Any $\bs w\in \R^N$ satisfying the equation $ \bs K \bs w = \bs 1$ defines an interpolant $s_{h} \in \Hcal_X$ which can be written as 
\[
s_{h}(\bs x) = \sum_{i=1}^N w_i\,\kernel(\bs x, \bs x_i) = \bs \Phi(\bs x)\bs w \quad \text{for all } \bs x \in X, 
\]
where $\bs \Phi(\bs x)$ is from \eqref{eq:canonical_basis}.
More generally, any interpolant that exactly matches $m$ entries of $h$ has the form 
\begin{equation}\label{eq:interpolant}
    s_{h,m}(\bs x) \isdef \sum_{j=1}^m w_j\,\kernel(\bs x, \bs \xi_j) \quad \text{for all } \bs x \in X,
\end{equation}
where $\{\bs \xi_1,\ldots,\bs \xi_m\} \subset X$. The interpolant $s_{h,m} \in \spn\{\kernel(\cdot, \bs \xi_1),\ldots,\kernel(\cdot,\bs \xi_m)\}$ and the problem of choosing a few number of scenarios from a large set of samples is therefore equivalent to choosing a subset from the dictionary $\{\kernel(\cdot,\bs x_1),\ldots,\kernel(\cdot,\bs x_N)\}$. Hereby, we resort to a greedy subsampling of the columns of ${\boldsymbol K}$ that can
approximate its column space within a certain error tolerance. With this
objective in mind, we adapt the notion of the pivoted Cholesky decomposition of ${\boldsymbol K}$
into the setting of Algorithm~\ref{algo:OMP} as below.  

\begin{algorithm}\caption{Orthogonal matching pursuit (OMP)}
\label{algo:OMP}
\begin{flushleft}
\begin{tabular}{ll}
\textbf{input:}  & symmetric and positive semi-definite
matrix ${\boldsymbol K}\in\mathbb{R}^{N \times N}$,\\
& tolerance \(\varepsilon > 0\)\\
\textbf{output:} &index set $\pi$ corresponding to the sparse scenarios $\Xi\isdef \{\bs \xi_1,\ldots, \bs \xi_m\} \subset \Rbb^d $, \\
&low-rank approximation \({\boldsymbol K}\approx{\boldsymbol L_m}
{\boldsymbol L}^\top_m\)\\
&and bi-orthogonal basis \({\boldsymbol B_m}\)
such that \({\boldsymbol B}^\top_m{\boldsymbol L_m}={\boldsymbol I}_m\) 
\end{tabular}
\end{flushleft}
\begin{algorithmic}[1]
\smallskip
\State Initialization: set $m\isdef 1$, \({\boldsymbol L_0}\isdef[\,]\),
 \({\boldsymbol B_0}\isdef[\,], \pi\isdef [\,],
 \boldsymbol d_0 = \diag(\boldsymbol K), \boldsymbol h_0 = \boldsymbol 1\),
$\operatorname{err}= 1$
\State\textbf{while} \(\operatorname{err}>\varepsilon\)
\State\StateIndent determine
\(\pi(m) \isdef \operatorname{arg}\max_{1\le i\le N} |h_{m-1,i}|\)
\State\StateIndent \(\operatorname{\pi} \isdef [\pi,  \pi(m)]\)
\State\StateIndent compute \[
\boldsymbol\ell_m
\isdef \frac{1}{\sqrt{d_{m-1,\pi(m)}}}\Big({\boldsymbol K}-{\boldsymbol L_{m-1}}
{\boldsymbol L}^\top_{m-1}\Big)\boldsymbol e_{\pi(m)}\quad\
\text{and}\quad\boldsymbol b_m
\isdef \frac{1}{\sqrt{d_{m-1,\pi(m)}}}\Big({\boldsymbol I}-{\boldsymbol B_{m-1}}
{\boldsymbol L}^\top_{m-1}\Big)\boldsymbol e_{\pi(m)}\]
\State\StateIndent set \({\boldsymbol L_m}
\isdef [{\boldsymbol L_{m-1}}, {\boldsymbol\ell}_m], \quad {\boldsymbol B_m}
\isdef [{\boldsymbol B_{m-1}}, {\boldsymbol b}_m]\)
\smallskip
\State\StateIndent set \({\boldsymbol h_{m}}\isdef {\boldsymbol h_{m-1}} 
- ({\boldsymbol b^\top_m}{\boldsymbol h_0}){\boldsymbol\ell}_m\)
\smallskip
\State\StateIndent set \(\boldsymbol d_m \isdef \boldsymbol d_{m-1} 
- \boldsymbol\ell_m \odot \boldsymbol\ell_m\), where $\odot$ denotes the
Hadamard product
\smallskip
\State\StateIndent set \(\operatorname{err}
\isdef\left \|{\boldsymbol h_{m}}\right\|_2/\|\bs h_0\|_2\),
\(m\isdef m+1\)
\end{algorithmic}
\end{algorithm}

The bi-orthogonal basis $\bs B_m$ issuing from the Algorithm \ref{algo:OMP} can 
be used to define an orthonormal set of basis functions. We define the 
\emph{Newton basis} as $\bs N(\bs x) \isdef \bs \Phi(\bs x)\bs B_m$, 
see \cite{PS2011}. The Newton basis evaluated on the set of samples $X$ gives 
\begin{equation}\label{eq:newton_basis}
    \bs N \isdef \big[N(\bs x_i)\big]_{i=1}^N 
    = \big[\bs \Phi(\bs x_i)\bs B_m\big]_{i=1}^N = \bs K \bs B_m = \bs L_m. 
\end{equation}
Hence, we have that the column vectors of $\bs L_m$ of Algorithm~\ref{algo:OMP}, 
i.e., $\bs \ell_1,\ldots,\bs \ell_m$ is just the Newton basis evaluated at 
the sample points. Denoting the $m$ basis functions as $N_1,\ldots,N_m$, 
we have that
\[
\operatorname{Im}(\bs L_m) = \spn\{N_1,\ldots,N_m\} 
= \spn\{\kernel(\cdot,\bs x_{\pi(1)}),\ldots,\kernel(\cdot, \bs x_{\pi(m)})\} 
\subset \spn\{\kernel(\cdot,\bs x_1),\ldots,\kernel(\cdot,\bs x_N)\} = \Hcal_X.
\]
Furthermore, they form an orthonormal system in $\Hcal_X$ i.e.$\langle 
N_i,N_j\rangle_{\Hcal_X} = \delta_{i,j}$ for $1\leq i,j\leq m$, since 
\[
\langle \bs N^\top, \bs N \rangle_{\Hcal_X} = \bs B_m^\top\langle \bs \Phi^\top, 
\bs \Phi\rangle_{\Hcal_X} \bs B_m = \bs B_m^\top \bs K \bs B_m 
= \bs B_m^\top \bs L_m = \bs I_m.
\]
Therefore, the Newton basis functions $N_1,\ldots,N_m$ are an orthonormal basis 
for the space of the kernel translates
$\kernel(\cdot,\bs x_{\pi(1)}),\ldots,\kernel(\cdot, \bs x_{\pi(m)})$. 
This basis is adaptively constructed by means of a Gram-Schmidt procedure
to represent the function \(h\in\Hcal\), i.e.,
\[
h_m = h_{m-1} - N_m \langle N_m, h\rangle_{\Hcal_X} 
= h-\sum_{j=1}^m N_j \langle N_j, h\rangle_{\Hcal_X}\defis h-P_\Fcal h.
\]

From Algorithm \ref{algo:OMP} line 6, the vector of evaluations of $h_m$ has 
the form $\bs h_m = (\bs I - \bs L_{m}\bs B_m^\top)\bs h_0$. Therefore, the component 
of $h_0$ in the direction of $N_{m}$ i.e., $N_m \langle N_m, h_0\rangle_{\Hcal_X}$ 
has the vector form $\bs \ell_m (\bs b_m^\top \bs h_0)$. Line $7$ of 
Algorithm~\ref{algo:OMP} computes exactly this error between $h_0$ and its 
orthogonal projection onto the subspace spanned by $N_1,\ldots,N_m$.
By standard arguments, we have that the mean-squared error satisfies
\begin{equation}\label{eq:error}
     \frac{1}{N}\sum_{i=1}^N \big(h(\bs x_i) - h_{\Fcal}(\bs x_i)\big)^2 
     \leq \frac{1}{N}\trace(\bs K - \bs L_m\bs L_m^\top) \|h\|_{\Hcal_X}^2.
\end{equation}
From the right-hand side, we infer that the mean-squared error of approximation 
is controlled by the reduction in the trace error by the low-rank approximation 
of the kernel matrix $\bs K$. Hence, for any general $h\in \Hcal_X$, the pivoting 
strategy of the pivoted Cholesky of \cite{HPS12} is the optimal. However,
the strategy in Algorithm \ref{algo:OMP} might result in a lower rank for
the task at hand. The total cost of performing the $m$ steps of Algorithm \ref{algo:OMP} is $\Ocal(m^2 N)$, see \cite{filipovic2023adaptive}.

\subsection{Step 2: Retrieval of probability weights}\label{sec:weightretrieval} 
Having extracted the set of $m$ scenarios $\Xi \isdef \{\bs \xi_1,\ldots,\bs \xi_m\} 
= \{\bs x_{\pi(1)},\ldots,\bs x_{\pi(m)}\}$, we proceed to enforce the 
probability simplex constraints. We deploy
the ADMM algorithm, see \cite{ADMM, Proximal_algorithms}. This approach can be 
used for projecting least-squares solutions of linear systems onto convex sets. 
Denote the set of the two simplex constraints as
\begin{equation}
    \Delta_{+} \isdef \big\{\boldsymbol w \in \Rbb^m : 
    \boldsymbol w \geq \boldsymbol 0 \big\}, \quad  \Delta_{1} 
    \isdef \big\{\boldsymbol w\in \Rbb^m: \boldsymbol 1^\top \boldsymbol w = 1\big\}.
\end{equation}
We now use the ADMM algorithm to solve the following problem
\begin{equation}\label{eq:constrained_optimization}
\begin{split}
    \bs \Lambda &= \underset{\bs w \in \R^m}{\argmin}
    \big\|\bs V^\top_{2q}(\bs \xi_1,\ldots,\bs \xi_m)\bs w - \widehat{\bs y}\big\|_2 \\
    &\text{subject to: } \Delta \isdef \Delta_{+} \cap \Delta_{1}.
\end{split}
\end{equation}
The solution to the above problem gives us the corresponding probabilities $\lambda_j$ for the scenarios $\bs \xi_j$ for $1\leq j\leq m$. 

The steps for solving the EMP are summarized in Algorithm \ref{algo:scenario_rep}
\begin{algorithm}
\caption{Final algorithm for scenario representation}
\label{algo:scenario_rep}
\begin{flushleft}
\begin{tabular}{ll}
\textbf{input:}  & set of samples $X \isdef \{\bs x_1,\ldots,\bs x_N\}$ \\
\textbf{output:} &  \(m\) scenarios \(\Xi \isdef \big\{\boldsymbol \xi_1,\cdots,\boldsymbol\xi_m\big\} \subset \Rbb^d\) and their probabilities $\{\lambda_1,\ldots,\lambda_m\} \subset \Delta$
\end{tabular}
\end{flushleft}
\begin{algorithmic}[1]
\State Compute the empirical moment matrix $\bs M$.
\State Perform pivoted Cholesky of $\bs M$, cp.\eqref{algo:pivChol} to obtain $\bs B_{\bs M}$.
\State Set $\bs Q = \bs V\bs B_{\bs M}/\sqrt{N}$ and perform OMP on $N\bs Q\bs Q^\top$, cp.\eqref{algo:OMP}. 
\State Extract the scenarios as $\bs \xi_{j} = \bs x_{\pi(j)}, 1\leq j \leq m$ ($\pi$ is the set of pivots obtained from the OMP).
\State Retrieve $\lambda_j$ by solving \eqref{eq:constrained_optimization} using ADMM.
\end{algorithmic}
\end{algorithm}

\begin{remark} An extremely popular method to solve least squares problems, that is 
employed vastly in statistics and compressive sensing is the basis pursuit, i.e., 
\(\ell_1\)-regularized least squares, or LASSO. A well-known property of LASSO is that 
of subset selection i.e.,~it leads to sparse solutions of least squares problems, 
see \cite{StatisticalLearning}. It thus suggests itself as a  
viable alternative to solving the EMP. However, in our particular setting of 
finding optimal representation using scenarios, it does not lead to sparsity. 
Concretely, the minimizer of the basis pursuit problem  
\begin{equation*}
  \boldsymbol w ^{\star}=\underset{\boldsymbol w \in \Rbb^N}{\argmin }
  \frac{1}{2} \big\| \boldsymbol Q ^{\top}\boldsymbol w
  -\widetilde{\boldsymbol y}\big\|_2^2+\lambda \|\boldsymbol w\|_1
  =\underset{\boldsymbol w \in \R^N}{\argmin }
  \frac{1}{2} \big\| \boldsymbol V ^{\top}\boldsymbol w
  -\widehat{\boldsymbol y}\big\|_2^2+\lambda \|\boldsymbol w\|_1, \lambda > 0,
 \end{equation*}
has the constant form $\boldsymbol w^{\star}=c\boldsymbol 1$ with
\begin{equation*}
 c=\begin{cases}
    \frac{1-N \lambda}{N},  &\text{ if } \frac{1}{N}  > \lambda, \\
    0,&\text{ otherwise}.  
   \end{cases}
\end{equation*}
\begin{proof}
 We define the two respective cost functions \(g_{\lambda}, h_{\lambda}\colon\Rbb^N \longrightarrow \Rbb\) as
 \begin{equation*}
  g_{\lambda}(\boldsymbol w)\isdef \frac{1}{2}\big\|
  \boldsymbol Q ^{\top}\boldsymbol w- \widetilde{\boldsymbol y}\big\|_2^2
  +\lambda\|\boldsymbol w\|_1, \text{ and }h_{\lambda}(\boldsymbol w)
  \isdef \frac{1}{2}\big\|  \boldsymbol V ^{\top}\boldsymbol w
  - \widehat{\boldsymbol y}\big\|_2^2+\lambda\|\boldsymbol w\|_1.
 \end{equation*}
 Clearly, for given $\lambda>0$, both $g_{\lambda}$ and $h_{\lambda}$ 
 are convex functions. Therefore, $g_{\lambda}$ and $h_{\lambda}$ are 
 subdifferentiable over $\Rbb^N$ cp.\ \cite[Corollary 3.15]{Amir_Beck}. 
 We can compute them as, cf.\ \cite{Amir_Beck},
\[
 \partial g_{\lambda}(\boldsymbol w) = \boldsymbol Q  \boldsymbol 
 Q ^{\top}\boldsymbol w-\boldsymbol Q \widetilde{\boldsymbol y}
 +\lambda \partial  \|\boldsymbol w\|_1, \text{ and }
 \partial h_{\lambda}(\boldsymbol w) = \boldsymbol V  
 \boldsymbol V ^{\top}\left (\boldsymbol w-\frac{1}{N}\boldsymbol 1 \right )
 +\lambda \partial  \|\boldsymbol w\|_1, \quad \text{where}
 \]
\[ (\partial  \|\boldsymbol w\|_1)_i = \begin{cases}\sig w_i, & w_i\neq 0, \\
x \in [-1,1], & w_i=0  .  
\end{cases}
\]
From Lemma \ref{lem:qprops}, there holds $\boldsymbol Q \widetilde{\boldsymbol y}
=\boldsymbol Q \boldsymbol Q ^{\top}\boldsymbol 1 \frac{1}{N} =\frac{1}{N}\boldsymbol 1$. 
Hence, the subgradient $\partial g_{\lambda}$ reads
\begin{equation*}
 \partial g_{\lambda} \left(\frac{c}{N}\boldsymbol 1\right )
 =\bigg (\frac{c-1}{N}+\lambda \sig (c) \bigg)  \boldsymbol 1. 
\end{equation*}
Since $\boldsymbol 0\in \partial g_{\lambda}$ describes the optimality for this strictly convex problem, we can exclude $c<0$,  since then the subgradient will be strictly element-wise negative. However, suppose that $1/N>\lambda $ and $c>0$, then,  from the equation $\frac{c-1}{N}+\lambda \sig (c)=0$,   
$c=1-N\lambda$ proves to be optimal. For  $ 1/N\leq \lambda$, the subgradient evaluated at $\boldsymbol 0$ becomes
\[
 \partial g_{\lambda} \left(\boldsymbol 0\right )=\left (-\frac{1}{N}+\lambda | x|\right )\boldsymbol 1,
\]
where we can again deduce $\boldsymbol 0 \in \partial g_{\lambda}(\boldsymbol 0)$ with $x=1/(N\lambda)$. 
For $h_\lambda$, 
\[
 \partial h_{\lambda} \bigg(\frac{c}{N}\boldsymbol 1\bigg)=\boldsymbol V\boldsymbol V^\top 
 \bigg (\frac{c-1}{N}+\lambda \sig (c) \bigg)  \boldsymbol 1, 
 \text{ and } \partial h_{\lambda} \bigg(\boldsymbol 0\bigg)
 =\boldsymbol V\boldsymbol V^\top\bigg (-\frac{1}{N}+\lambda | x|\bigg)\boldsymbol 1,
\]
the cases are analogous. 
\end{proof}
Note that in the above proof, we do not assume any positivity of the weight vector \(\boldsymbol w\). 
The result therefore has far-reaching consequences for practical purposes. Foremostly, it shows that 
it is not possible to obtain optimal quadrature rules (with few atoms) in general that match the 
empirical moments of polynomials. This fact severely restricts the pool of candidate algorithms 
that can be deployed for efficient recovery of a few scenarios from the empirical moments. 
In particular, many popular first-order proximal gradient methods, like the FISTA, 
see \cite{FISTA, Nesterov}, 
POGM, see \cite{POGM, Adaptive_POGM}, etc.~to name a few cannot be used in this context.
\end{remark}


\section{Numerical experiments}\label{sec:numExp}
In this section, we perform numerical experiments to investigate the behavior of the 
proposed algorithms concerning the dimension, the number of data points, and the order 
of the maximum degree of the polynomial basis generating the moment matrix in question. 
We test the algorithm by \cite{Lasserre}, the maximum volume algorithm, see 
\cite{bittante,BSV10,BosSDeMarchiSommarivaVianello,GVL13,SOMMARIVA20091324}, the graded 
hard thresholding pursuit (GHTP), see \cite{GHTP}, the OMP Algorithm \ref{algo:OMP}, 
and finally the covariance scenarios described in Algorithm \ref{algo:cov_scenarios}. 
Algorithm \ref{algo:cov_scenarios} is only attempted for $q=1$ since it was developed 
solely to match covariance. 
The flat extensions for Lasserre's algorithm is obtained by
a convex relaxation and semidefinite programming (SDP). We start from a rank-$r$ input
moment matrix $\boldsymbol M _{\boldsymbol y}\in
\Rbb ^{m_q\times m_q}$ with the goal to find its flat extension
$\boldsymbol M _{\tilde{\boldsymbol y}}\in \Rbb^{m_{q+l}\times
m_{q+l}}$ for some $l\geq 1$. Given the block matrix representation 
\begin{equation*}
\boldsymbol M _{\tilde{\boldsymbol y}}=\begin{bmatrix}
\boldsymbol A _{\tilde{\boldsymbol y}} &
\boldsymbol B_{\tilde{\boldsymbol y}} \\  
\boldsymbol B_{\tilde{\boldsymbol y}}^{\top} &
\boldsymbol C_{\tilde{\boldsymbol y}}
\end{bmatrix}
\end{equation*}
with
\[
{\boldsymbol A _{\tilde y}\in\Rbb^{m_{q+l-1}\times m_{q+l-1}}},\
                  {\boldsymbol B_{\tilde y}\in\Rbb^{m_{q+l-1}
                  \times (m_{q+l}-m_{q+l-1})}},\
                  {\boldsymbol C_{\tilde y}\in\Rbb^{(m_{q+l}-
                  m_{q+l-1})\times (m_{q+l}-m_{q+l-1})}},
\]
we subsequently minimize the
trace of the bottom-right block. If now flat extension is found,
we increase \(l\) and restart the procedure.

All algorithms have been implemented on a single Intel Xeon E5-2560 
core with 3 GB of RAM, except for Lasserre's algorithm which 
is run on an 18-core Intel i9-10980XE machine with 64 GB RAM. 

\subsection{Multivariate Gaussian mixture distributions}\label{sec:gaussianmixture}
We first examine the different algorithms on simulated data sampled from a Gaussian mixture model with different numbers of dimensions $d$ and having different numbers
of clusters \(c\), components of the mixture distribution, that induce multiple modes
into the joint distribution. 
\begin{figure}
    \centering
    \includegraphics[scale=0.5]{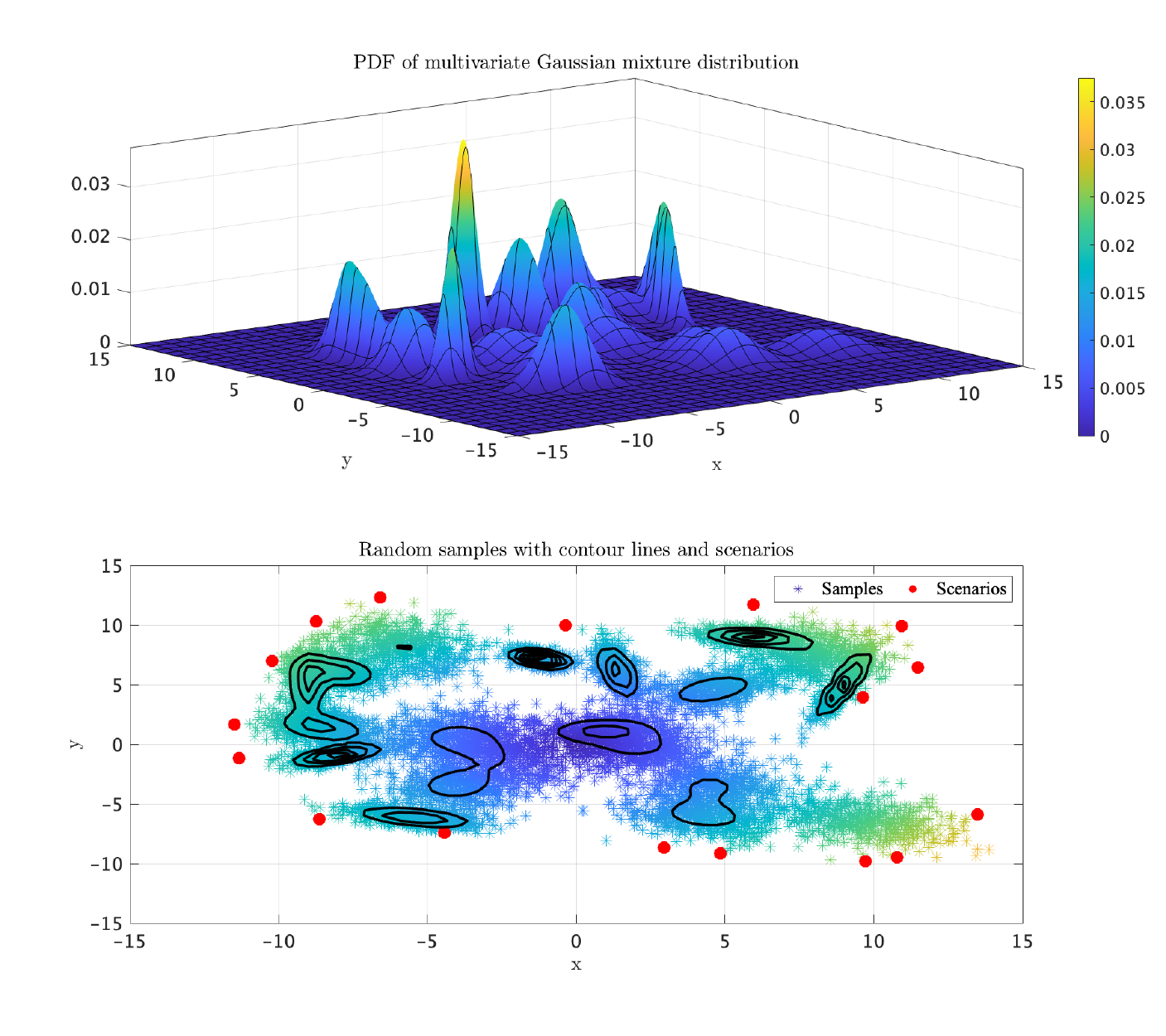}
    \vspace{-2em}
    \caption{\small The OMP scenarios (red dots) from $10000$ random samples of a bivariate Gaussian mixture distribution with their contour lines. The PDF of the distribution is given in the first tile.}
    \label{fig:contour}
\end{figure}
We test for $d=2,5,10,20, 50, 100$ and 
\(c =5,10,20,25,50,100,150,200\). To investigate the efficacy of the algorithms, 
we do not use a higher $c$ for relatively lower $d$ and vice-versa. The means of 
the different clusters are taken to be random vectors of uniformly distributed numbers 
in the interval $(-50,50)$. The variance-covariance matrices for the different clusters 
are either (i) randomly generated positive definite matrices with 
$\mathcal{N}(1,1)$-distributed eigenvalues, or (ii) the identity matrix. Except in the 
case of unit variance-covariance, the different clusters have different variance-covariance 
matrices. The mixing proportions for the different clusters are taken to be either (i) 
random or, (ii) equal. We can categorize them as
\begin{enumerate}
    \item random variance-covariance and random mixing proportion
    \item random variance-covariance and equal mixing proportion
    \item unit variance-covariance and random mixing proportion
    \item unit variance-covariance and equal mixing proportion
\end{enumerate}

For each of the above cases, $100$ data sets are randomly constructed, each containing 
$10000$ samples. For each data set containing the samples, we calculate the sample 
moments $\widehat{\boldsymbol y}$ up to order $2q$ with $q=1,2$ which give rise to 
the empirical moment matrices $\boldsymbol M_{\widehat{\boldsymbol y}}$ of orders 
$1$ and $2$ respectively. The $100$-dimensional and $50$-dimensional data sets are 
computed for $q=1$ only, still resulting in $5151$-dimensional, and $1326$-dimensional 
moment matrices, respectively. The biggest moment matrix for $q=2$ is computed for 
twenty dimensions, resulting in a $10262$-dimensional moment matrix.
It is noteworthy to point out that there is, theoretically speaking, no bound on the 
degree to which we can calculate the moments. However, the computational cost of 
obtaining higher-order moment information is high. We provide a visualization for the case $d=2$ where we first generate $10000$ random samples from a bivariate Gaussian mixture distribution with $20$ clusters, and then retrieve the OMP scenarios as in Figure~\ref{fig:contour}.

We compare the performance of the different algorithms with respect to the relative 
errors, computation times, and the number of scenarios extracted, across different 
dimensions and clusters, as shown subsequently.

\subsubsection{Relative errors}
We compare the behavior of the different algorithms about how well the sample 
moments are matched using the scenarios. For a fair comparison of the relative 
errors for the different algorithms, we define
\begin{equation}\label{eq:rel_error}
    \operatorname{err} \isdef 
    \frac{\big\|\boldsymbol M_{\widehat{\boldsymbol y}} - 
    \boldsymbol V^\top\boldsymbol\Lambda\boldsymbol V\big\|_F}
    {\big\|\boldsymbol M_{\widehat{\boldsymbol y}}\big\|_F}
\end{equation}
where \(\boldsymbol V\) is the Vandermonde matrix of order $q$ generated from the 
samples, and \(\boldsymbol \Lambda\) is the (sparse) diagonal matrix containing 
the respective probability weights of the samples.\\
We plot the relative errors \eqref{eq:rel_error} against the dimensions and number 
of clusters in Figure \ref{fig:comparison_1} and Figure \ref{fig:comparison_2} 
respectively for $q=1$ and $q=2$. In both figures, the $y$-axis represents the 
relative error in the log scale. The $x$-axis contains the different dimensions 
$d$ which is a categorical variable. The color bar depicts the different number 
of clusters and is taken as a categorical variable. The performance of the 
different algorithms is shown in the respective tiles. 
\begin{figure}
    \centering
    \includegraphics[scale=0.58]{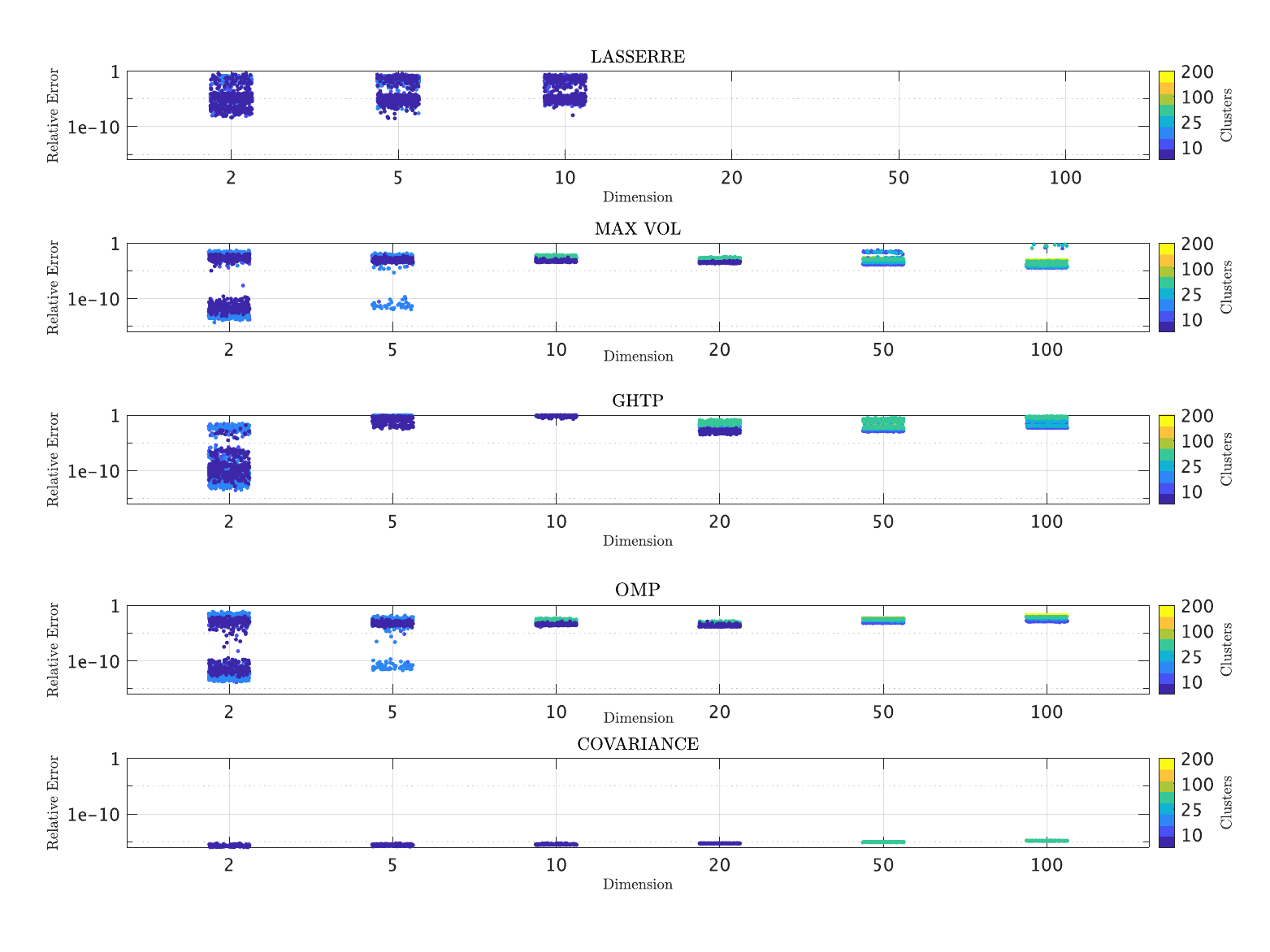}
    \vspace{-2em}
    \caption{\small Relative error comparison for \texorpdfstring{$q=1$}{q=1}.
    The data are generated from Gaussian mixture models according to Section 
    \ref{sec:gaussianmixture}. The relative errors (left $y$-axis) are computed 
    according to \eqref{eq:rel_error}. The $x$-axis shows the dimension $d$ and 
    the right $y$-axis shows the number of clusters entering the mixture distribution.}
    \label{fig:comparison_1}
\end{figure}
\paragraph{Lasserre} The algorithm from \cite{Lasserre} is computationally feasible 
only until dimension $10$ for $q=1$ and $d=5$ for $q=2$. It breaks down thereafter 
since the algorithm involves finding flat extensions requiring the solution of 
large semidefinite relaxations. For each dimension, the relative errors vary 
considerably from the order of $10^{-8}$ to $10^{-1}$ for $q=1$, and $10^{-6}$ 
to $10^{-1}$ for $q=2$. 
There is no particular pattern in their distribution with the number of clusters. 
The results suggest that Lasserre's algorithm is not well suited for large and 
high-dimensional data sets.

\paragraph{Maximum volume} We find that, unlike Lasserre's algorithm, the maximum 
volume algorithm is applicable up to  $d=100$ for $q=1$, and $d=20$ for $q=2$. The 
relative errors range from the order of $10^{-14}$ to $10^{-2}$ for $q=1$, 
and $10^{-4}$ to $10^{-3}$ for $q=2$. They exhibit a slight decrease with the number 
of dimensions, but across dimensions, and a maximum increase by a factor of $10$ with 
the number of clusters. This is a considerable improvement from Lasserre's algorithm 
insofar as the relative errors are more stable across dimensions and the number of clusters. 
However, $d=20$ and $q=2$ exhibit a sharp decrease and range in the order of $10^{-10}$, however.
    
\paragraph{GHTP} This algorithm is easily applicable up to $d=100$ for $q=1$, and $d=20$ for $q=2$. 
The relative errors range from the order of $10^{-13}$ to $10^{0}$ for $q=1$. Except for the case 
of $d=2$, the relative errors mostly range from the order of $10^{-4}$ to $10^{0}$. For $q=2$, 
the relative errors range from the order of $10^{-4}$ to $10^{0}$. In terms of the stability 
of the relative errors, although the GHTP fares better than the maximum volume algorithm, 
it is a deterioration when considering the range of the errors. 

\paragraph{OMP} We find that this algorithm is easily applicable until $d=100$ for $q=1$, 
and $d=20$ for $q=2$. The relative errors range from the order of $10^{-14}$ to $10^{-2}$ 
for $q=1$, and  $10^{-10}$ to $10^{-3}$ for $q=2$, with most in the range of $10^{-4}$ to $10^{-3}$. 
Not only are the errors overall much more stable across the different dimensions, but their orders 
are also comparable to that of the maximum volume algorithm. Except for the case of $d=2$, 
the relative errors mostly range from the order of $10^{-4}$ to $10^{-2}$. While the range 
of the relative errors for the OMP is similar to that of the maximum volume algorithm, 
there is a difference in that we do not observe any significant noticeable pattern in the 
distribution of the relative errors across dimensions. 
    
\paragraph{Covariance scenarios} We find that this algorithm is not only applicable until 
$d=100$, but the relative errors are considerably smaller than for all the above algorithms, 
ranging in the order of $10^{-17}$ to $10^{-15}$.  Hence, as stipulated in the earlier section, 
in the special case of covariance matrices, i.e., when $q=1$, the covariance algorithm performs 
the best since it matches exactly all the sample moments until the second moment. 
By its construction, covariance scenarios are not applicable for $q=2$. 
\begin{figure}
    \centering
     \includegraphics[scale=0.65]{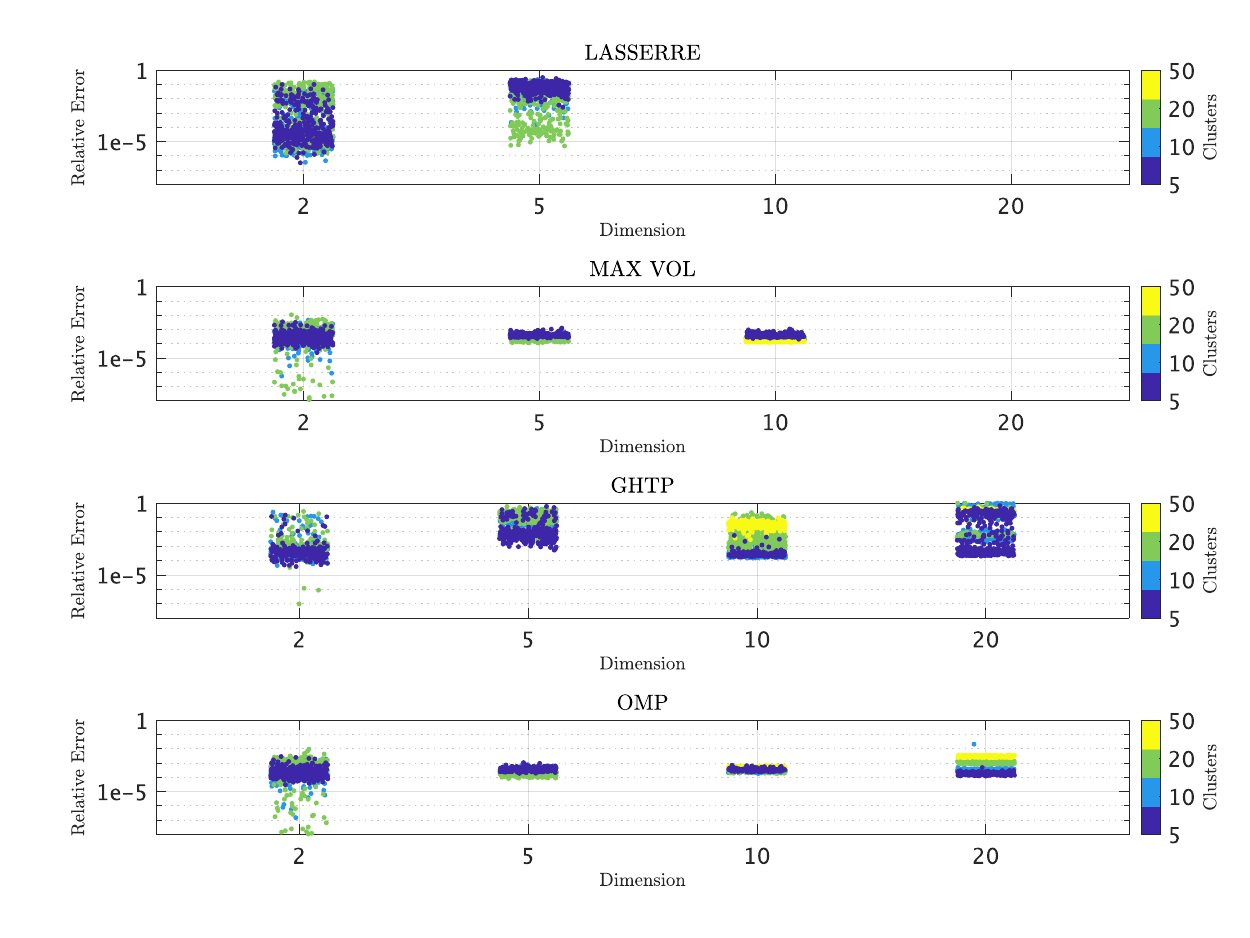}
    \vspace{-1.5em}
    \caption{\small Relative error comparison for \texorpdfstring{$q=2$}{q=2}. The data are 
    generated from Gaussian mixture models according to Section \ref{sec:gaussianmixture}. 
    The relative errors (left $y$-axis) are computed according to \eqref{eq:rel_error}. 
    The $x$-axis shows the dimension $d$ and the right $y$-axis shows the number of clusters 
    entering the mixture distribution.}
    \label{fig:comparison_2}
\end{figure}

\subsubsection{Number of scenarios}
For practical purposes when working with large data sets, given a relative error, sparsity 
is preferred in the number of scenarios extracted when solving the EMP. To that end, we 
compare the number of scenarios extracted from the different algorithms. Except for the 
case of covariance scenarios, that have uniform weights, for the remaining methods, 
we retrieve the weights using the ADMM algorithm described in Section \ref{sec:weightretrieval}. 

For each probability vector, we set the entries less than $10^{-8}$ to zero. As such, we 
consider the number of scenarios to be the number of non-zero entries of the modified 
probability vector. We then plot the number of scenarios against the dimensions and number 
of clusters for $q=1,2$ in Figure \ref{fig:comparison_scen_1} and Figure 
\ref{fig:comparison_scen_2} respectively. In both figures, the $y$-axis denotes the number 
of scenarios and is taken in the log scale. The $x$-axis contains the different dimensions 
which is a categorical variable. The color bar denotes the number of clusters, and it is 
also taken as a categorical variable. The performance of the different algorithms is shown 
in the respective tiles.
\begin{figure}
    \centering
    \includegraphics[scale=0.58]{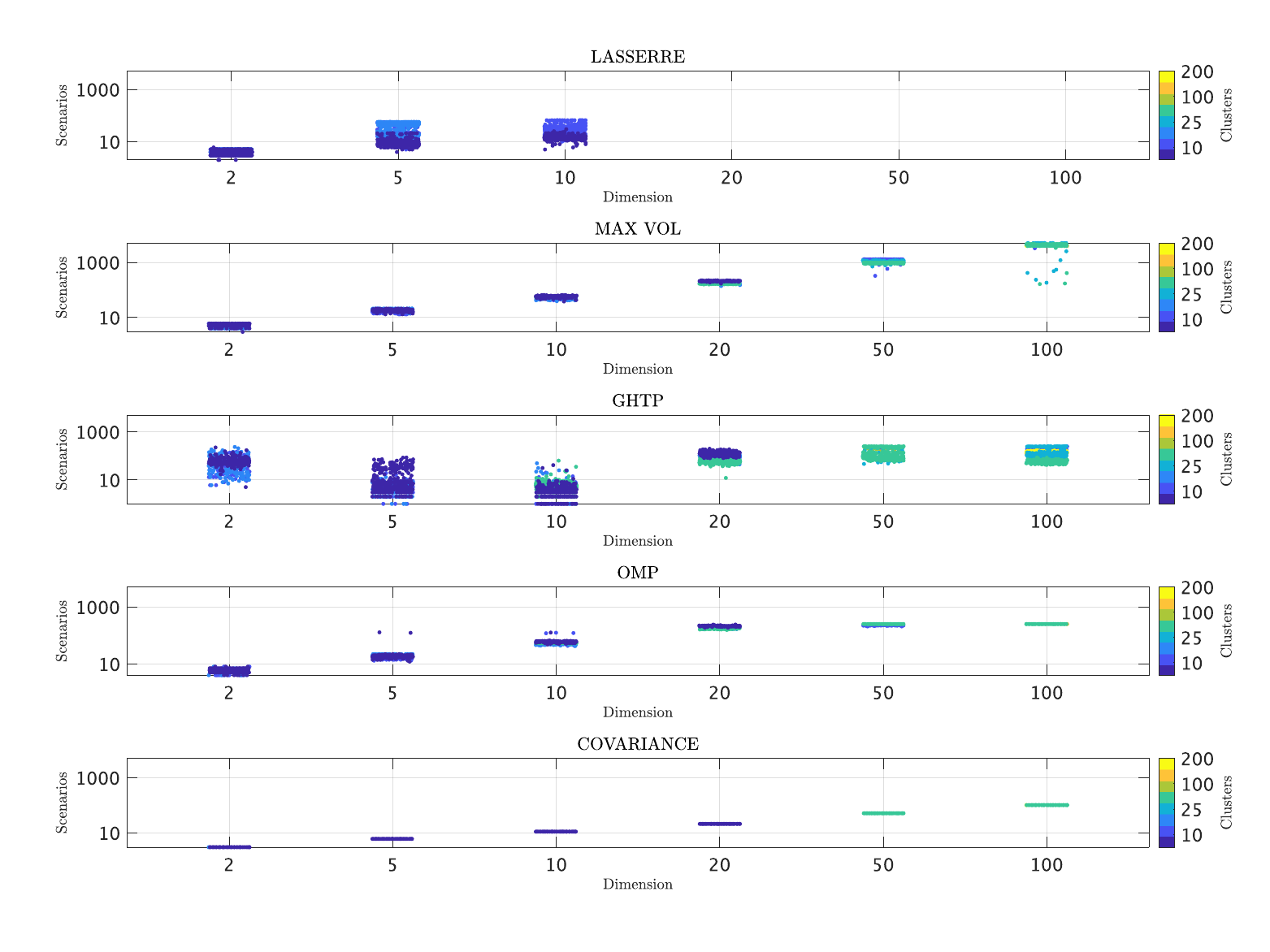}
    \vspace{-2em}
    \caption{\small Number of scenarios comparison for \texorpdfstring{$q=1$}{q=1}. The data 
    are generated from Gaussian mixture models according to Section \ref{sec:gaussianmixture}. 
    The left $y$-axis shows the number of scenarios. The $x$-axis shows the dimension $d$ and 
    the right $y$-axis shows the number of clusters entering the mixture distribution.}
    \label{fig:comparison_scen_1}
\end{figure}
\paragraph{Lasserre} Lasserre's algorithm depends on the flat extension of the moment matrix, 
with the precise rank of the flat extension determining the number of scenarios. Accordingly, 
we find that the number of scenarios, while not too high for each dimension, still shows 
variability for $d=5,10$. For $q=2$, the number of scenarios is also not exceedingly high.
 
\paragraph{Maximum volume} We first observe that the number of scenarios is considerably high 
across the dimensions, ranging from $4000$ to $5000$ for $d=100$. Except for $d=100$, there is no 
considerable difference in the number of scenarios retrieved with the number of clusters. Similar 
to the case for $q=1$, the number of scenarios is considerably high across the dimensions and is 
equal to $10000$ for $d=20$, i.e., it does not give sparse scenarios and considers all the sample 
points. There is also no variation with different clusters for either dimension. Nevertheless, 
it cannot be used for sparse recovery of scenarios from large samples.
 
\paragraph{GHTP} For this algorithm, we find that the number of scenarios is much less compared to 
that of the maximum volume algorithm, and similar to that of Lasserre's algorithm for $d=2,5,10$. 
This result is not surprising, given that the algorithm constructs the index set whose size equals a 
prior input maximum number of iterations (see \cite{GHTP}). For $d=20,50,100$, however, the number of 
scenarios varies considerably with the number of clusters. For $q=2$, the number of scenarios is 
much less, with the maximum being $250$, compared to that of the maximum volume algorithm, nevertheless, 
it varies considerably with the number of clusters for each dimension. 

\paragraph{OMP} The number of scenarios retrieved is somewhat similar to that of the GHTP. This results 
from the fact that we set the maximum number of iterations for the OMP to be the same as that of the GHTP. 
Unlike the GHTP however, we find that in this case, there is no considerable variation in the number of 
scenarios with the number of clusters, for each dimension. Therefore, the OMP outperforms the above 
algorithms, with regard to the number as well as the consistency in the retrieval of scenarios. 
The same interpretation holds for $q=2$. 
 
\paragraph{Covariance scenarios} The range for the number of scenarios is the least among all the 
algorithms for all the dimensions. Furthermore, there is no variation in the case of the different 
clusters for each dimension. This reaffirms that the covariance scenarios perform the best about the 
number of scenarios retrieved, keeping in mind that they are applicable only in the case $q=1$. 
\begin{figure}
    \centering
    \includegraphics[scale=0.65]{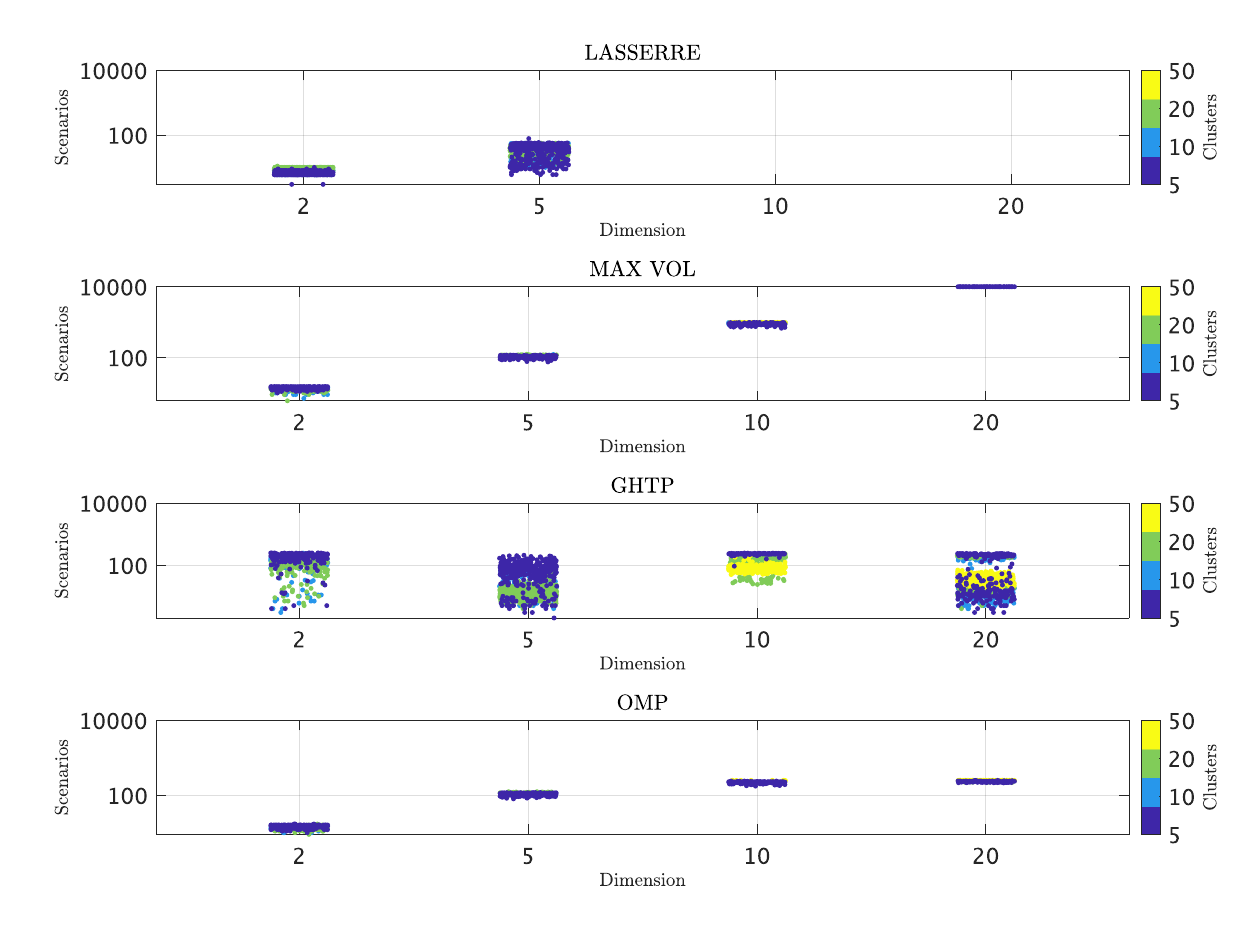}
    \vspace{-1.5em}
    \caption{\small Number of scenarios comparison for \texorpdfstring{$q=2$}{q=2}. The data are 
    generated from Gaussian mixture models according to Section \ref{sec:gaussianmixture}. 
    The left $y$-axis shows the number of scenarios. The $x$-axis shows the dimension $d$ 
    and the right $y$-axis shows the number of clusters entering the mixture distribution.}
    \label{fig:comparison_scen_2}
    \setlength{\belowcaptionskip}{-100pt}
\end{figure}

\subsubsection{Computation time} Finally, we test how fast the algorithms are in computing 
the scenarios from large empirical data sets, keeping practical applications in mind. 
To that end, we plot the CPU run times of the different algorithms against the different 
dimensions and number of clusters for $q=1,2$ in Figure \ref{fig:comparison_time_1} and 
Figure \ref{fig:comparison_time_2} respectively. We take the $y$-axis to be the computation
times in the log scale. The $x$-axis contains the different dimensions, which is a categorical
variable. The color bar depicts the different number of clusters and is also taken as a 
categorical variable.
\begin{figure}
    \centering
    \includegraphics[scale=0.58]{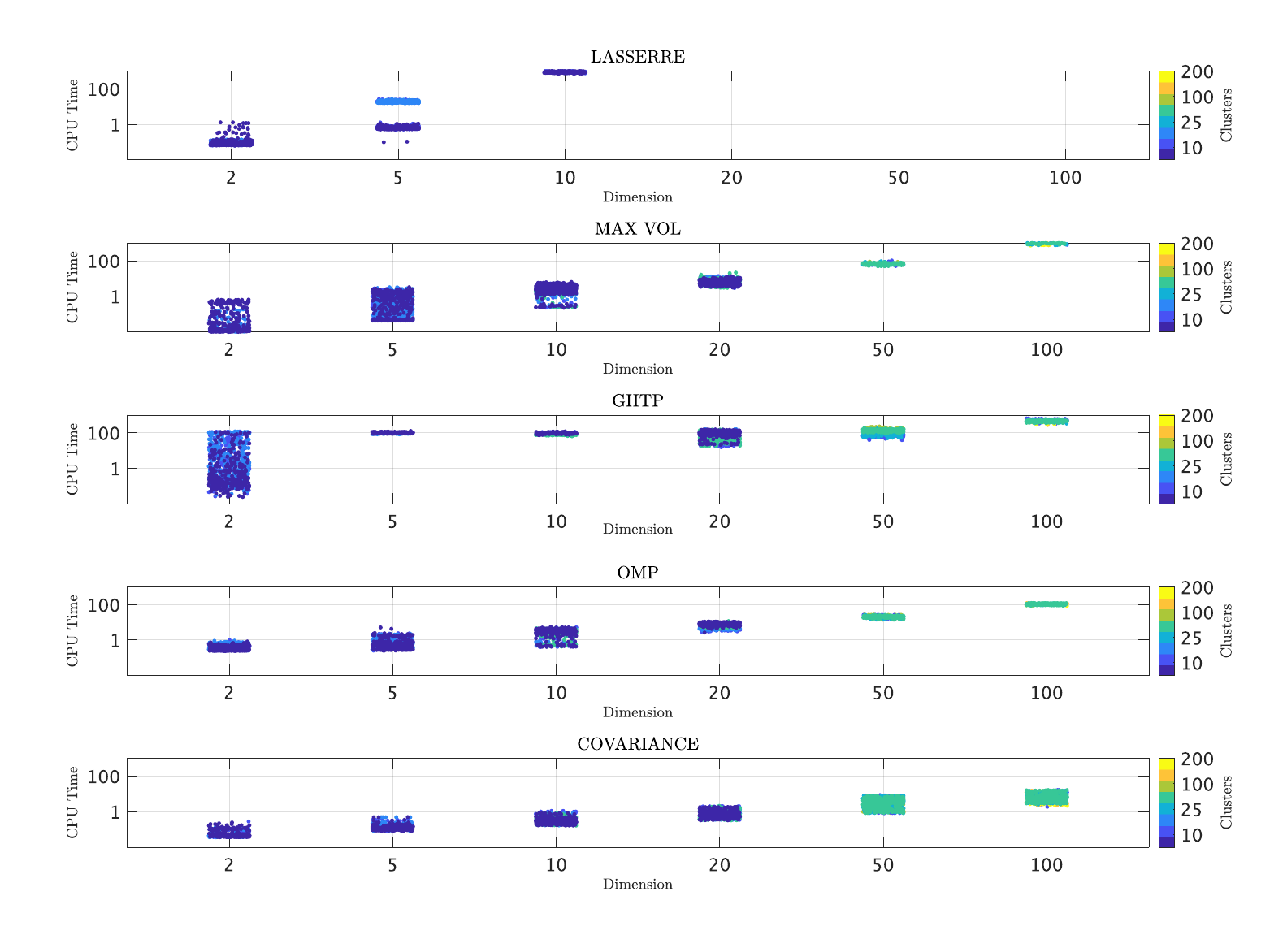}
     \vspace{-2em}
    \caption{\small Computation time comparison for \texorpdfstring{$q=1$}{q=1}. The data 
    are generated from Gaussian mixture models according to Section \ref{sec:gaussianmixture}. The left $y$-axis shows the computation time. The $x$-axis shows the dimension $d$, and the 
    right $y$-axis shows the number of clusters entering the mixture distribution.}
    \label{fig:comparison_time_1}
\end{figure}
\paragraph{Lasserre} Computation times range from $10^{-1}$ to $10^{3}$ seconds just the until 
$d=10$ only. For $d=5$, the run times increase with the number of clusters, by an order of $10$. 
Therefore, Lasserre's algorithm becomes computationally costly in the face of higher dimensions. 
For $q=2$, We find that the times range from $10^{-1}$ to $10^{3}$ seconds until $d=5$ only. 
For $d=5$, the run times increase with the number of clusters, by an order of $100$.
 
\paragraph{Maximum volume} The run times range from $10^{-2}$ to $10^3$ seconds across the different 
dimensions and clusters. A noticeable aspect of this algorithm is that while the run-time increases, 
the variation among them, however, decreases with an increase in the dimension. For $q=2$, 
the run times range from $10^{-1}$ to $10^3$ seconds. Similar to $q=1$, the run times of the 
algorithm for this case increase, and the variation among them, decreases with an increase in 
the dimension. However, a particular drawback remains that the run times show sharp increases 
with the number of dimensions.
    
\paragraph{GHTP} For the GHTP algorithm, the times range from $10^{-1}$ to $10^{2}$ seconds. 
Except for the case of $d=2$, overall the run times of the GHTP algorithm lie in the range of 
$10^{2}$, i.e., it is fairly similar across the different dimensions and clusters and does not 
increase with an increase in either. For $q=2$, the times range from $10^{2}$ to $10^{3}$ seconds. 
Unlike in the case for $q=1$, here, we observe that in general, the times increase with the number 
of dimensions, with a sharp increase for $d=20$.
    
\paragraph{OMP} The run times of the OMP range from $10^{-1}$ to $10^{2}$ seconds across different 
dimensions and clusters. Similar to the maximum volume algorithm, the OMP also shows a gradual 
increase in the computation times with an increase in the dimensions. Moreover, in general, 
it fares better than the GHTP, whose run time is at least higher by a factor of $10$. For $q=2$, 
we observe that the run times range from $10^{-1}$ to $10^{2}$ seconds. Despite the increase 
in the run times with the number of dimensions, nevertheless, we can conclude that it is still 
considerably faster than all of the above algorithms. Note that except for $d=20$ for all 
the other dimensions, it is still
    below $10$ seconds, which further reinforces the efficiency of the algorithm. 
    
\paragraph{Covariance scenarios} We see that the run times for the covariance scenarios 
range from $10^{-2}$ to $10^{1}$ seconds, which is considerably better than that of 
all the above algorithms. 

\begin{figure}
    \centering
    \includegraphics[scale=0.65]{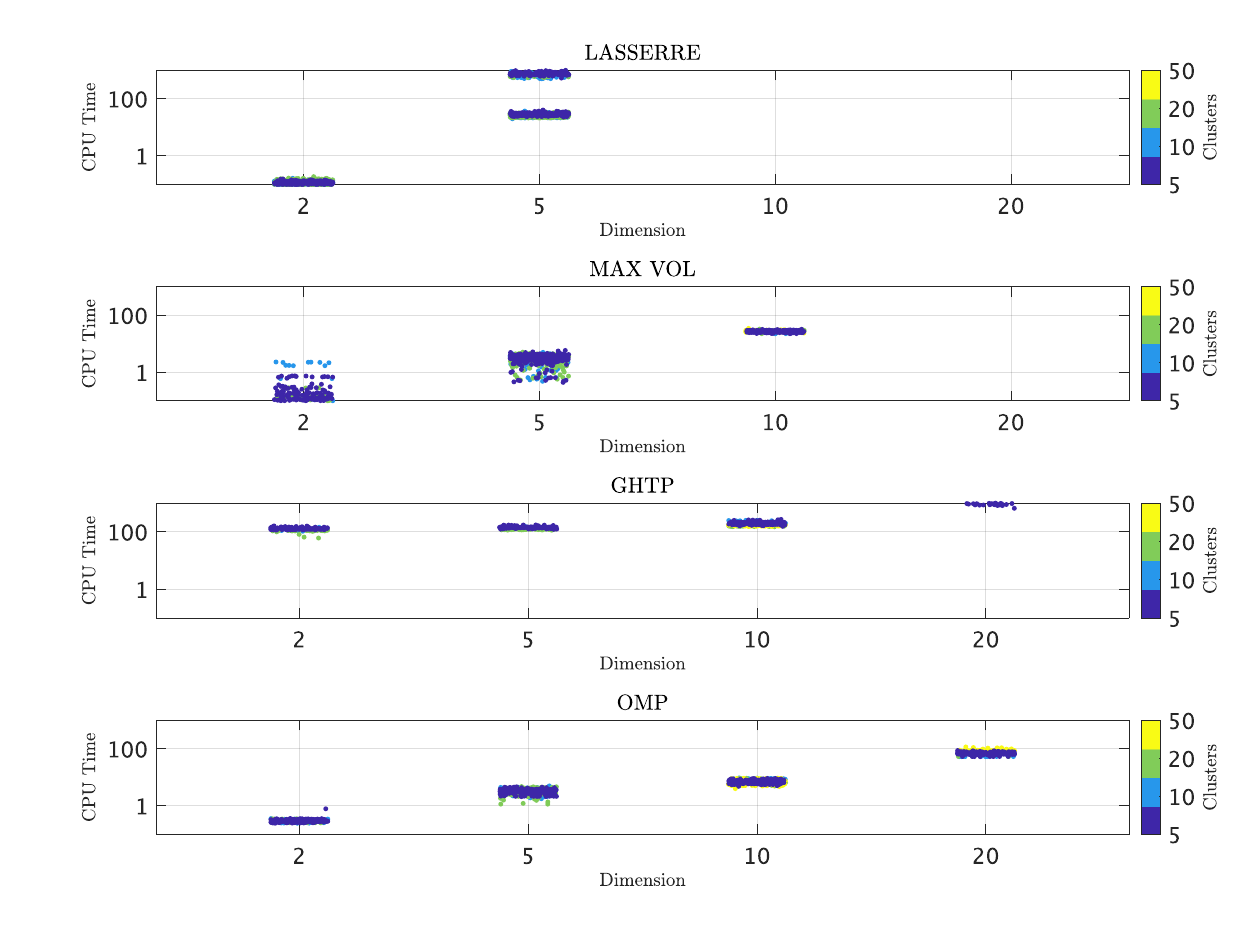}
    \vspace{-1.5em}
    \caption{\small Computation time comparison for \texorpdfstring{$q=2$}{q=2}. 
    The data are generated from Gaussian mixture models according to Section \ref{sec:gaussianmixture}. 
    The left $y$-axis shows the computation time. The $x$-axis shows the dimension $d$, and the 
    right $y$-axis shows the number of clusters entering the mixture distribution.}
    \label{fig:comparison_time_2}
\end{figure}

\subsection{Portfolio optimization with CVaR constraints} 
In this section, we discuss an application of the proposed OMP algorithm in finance using 
excess return data from 
\href{https://mba.tuck.dartmouth.edu/pages/faculty/ken.french/data_library.html}{Fama-French}. 
The data set contains more than $25000$ daily excess returns of $25$ financial assets. A portfolio, or trading strategy,  is defined through the number and choice of assets.
An important aspect of portfolio risk management is to address extreme outcomes, wherein investors are concerned by the multivariate nature of risk and the scale of the portfolios 
under consideration. For decision-making, any sensible strategy involves dimension reduction 
and modeling the key features of the overall risk landscape. To this end, we consider here the 
problem of maximizing expected portfolio returns with expected shortfall constraints as a proxy 
for the entailing risk. Expected shortfall, also known as conditional value-at-risk (CVaR), 
is a coherent risk measure that quantifies the tail risk an investment portfolio has,
see \cite{CVaR}. In 2016, the \href{https://www.bis.org/bcbs/basel3.htm?m=76}{Basel committee} 
 hosted by the Bank of International Settlements (BIS) proposed an internationally agreed set of regulatory measures in response to the financial crisis of 2007-2009; one of them was that the market risk capital of banks should be
calculated with CVaR or expected shortfall, see \cite{BCBS2019}. Nevertheless, methods for portfolio optimization that rely on such a downside risk measure as CVaR, are often difficult to implement, do not scale efficiently, and may result in less-than-ideal portfolio allocations. To resolve these issues and construct an optimal portfolio that minimizes shortfall risk, we consider the following optimization problem:
\begin{equation}\label{eq:portfolio_optimization}
    \begin{split}
        \boldsymbol w^{*} &= \underset{\boldsymbol w \in \mathbb R^d}{\operatorname{argmin}} 
         \quad -\boldsymbol w^\top \boldsymbol \mu \\
         \text{subject to: } &\text{CVaR}_{\alpha} \leq \delta, \quad \boldsymbol w \in W,
    \end{split}
\end{equation}
where \(\boldsymbol \mu\) is the expected return of the individual assets, 
\(\text{CVaR}_{\alpha}\) denotes the conditional value-at-risk,
at the \(\alpha\) level with \(\alpha \in (0,1)\), and \(\delta\) represents the maximum portfolio 
loss that is acceptable.
Specifically, the CVaR or expected shortfall is the expected loss 
conditional on the event that the loss exceeds
the quantile associated with the probability \(\alpha\). Further, $W$ can capture constraints like relative portfolio weights, short-sale restrictions, 
limits on investment in a particular asset, etc. 

We use the reformulation of the CVaR as in \cite{Uryasev2001} to find a global solution. 
While, generally for the optimization approach to work, simulations using a Monte-Carlo approach or bootstrapping of historical data are required, we use our generated scenarios, using the OMP, 
to showcase the applicability in the case of non-smooth optimization as well. The scenarios from the OMP capture the underlying tail risk, which leads to efficient portfolios whose risk lies below a certain threshold.
\begin{figure}
    \centering
    \includegraphics[scale=0.5]{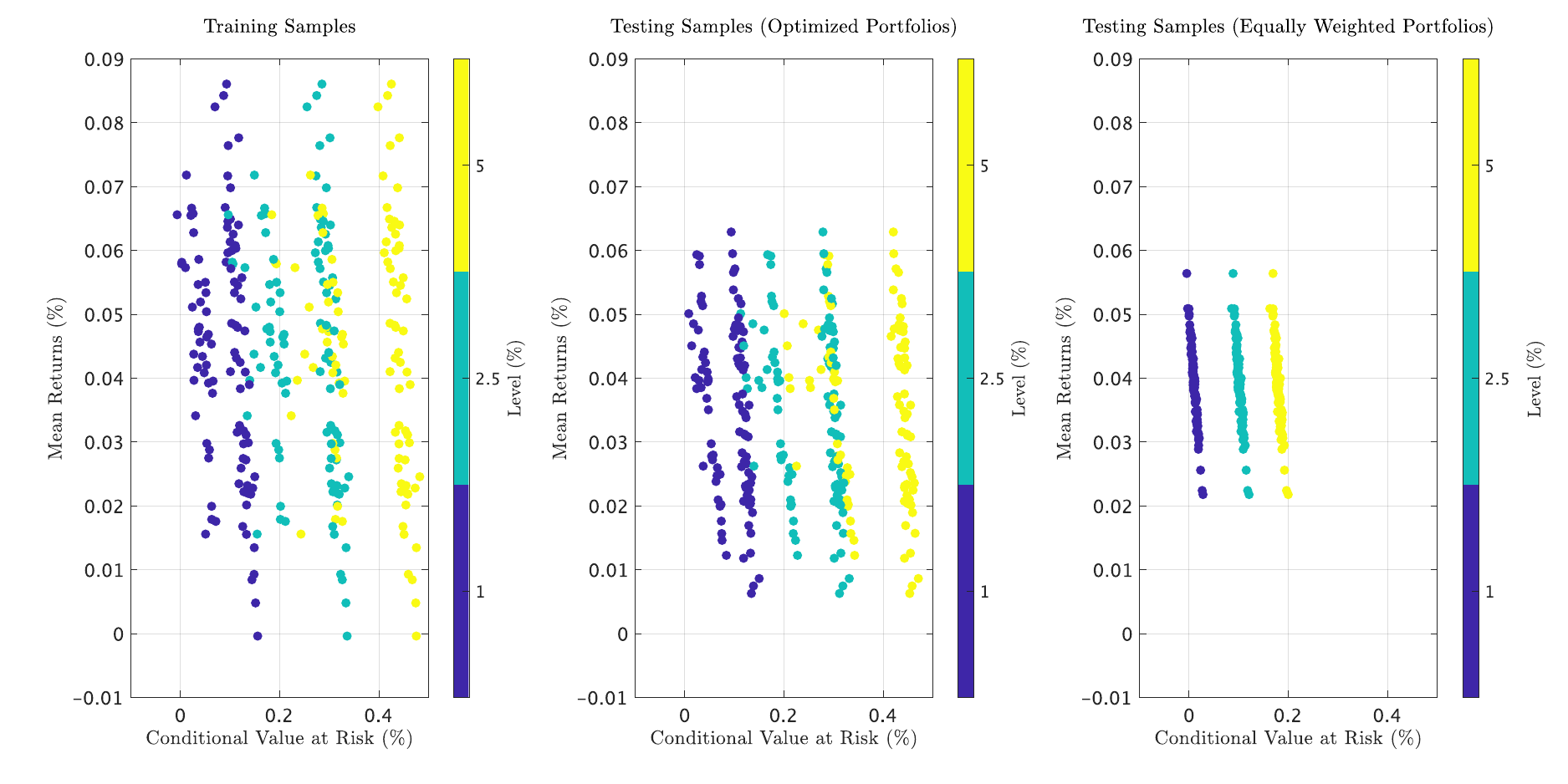}
    \vspace{-1em}
    \caption{\small Daily mean returns versus CVaR
    for the worst $1\%, 2.5\%$ and $5\%$ cases of the expected portfolio loss applied to the training and testing samples with both optimized portfolios and the naive portfolio. The data are daily 
    returns from the $25$-dimensional \href{https://mba.tuck.dartmouth.edu/pages/faculty/ken.french/data_library.html}{Fama-French} portfolios.}
    \label{fig:portfolio_cvar}
\end{figure}
To that end, we split the portfolio data set into a training set of $10000$ observations and a testing 
set of the rest of the observations. We first extract the scenarios and the corresponding probabilities 
from the training data set using the OMP algorithm, see Algorithm \ref{algo:OMP}. We then solve problem \eqref{eq:portfolio_optimization} using the scenarios and the approach suggested in \cite{Uryasev2001}, 
taking $\delta$ to be the conditional value-at-risk for the the naive portfolio rule 
\({\boldsymbol w}=(1/d){\boldsymbol 1}\) at the confidence levels $\alpha = 0.95, 0.975, 0.99$, which correspond to the expected loss in the worst $5\%, 2.5\%$ and $1\%$ cases respectively. The $\operatorname{CVaR}_\alpha$ in the constraint is the corresponding conditional value-at-risk of the expected portfolio loss in the worst $5\%, 2.5\%$ and $1\%$ case as well. We perform back-testing using the optimized portfolio weights 
on the test sample observations. We perform $100$ simulations and observe the distribution of the expected 
daily returns versus the CVaR or the expected shortfall for both the training and testing samples in 
Figure \ref{fig:portfolio_cvar}. Furthermore, we plot the same for the case of the equally weighted portfolios. All the observations are considered as percentages, the color bar 
indicates the $\alpha$ level considered for the optimization problem. The figure shows that the 
optimized portfolios perform well on the testing set as well, achieving a yearly mean return of about 
$12\%$ to $14\%$ in general; hence, our realized scenarios can still capture the behavior of the 
underlying asset returns considerably well, even when trying to match moments of non-smooth 
functions of polynomials. 


\section{Conclusion and future work}\label{sec:Conclusion}
We have proposed two algorithms to find scenarios representing large samples of panel data. 
The first converts estimated sample covariance matrices to a set of uniformly distributed 
scenarios that possibly have not been observed before. The second picks a particular subset 
of realized data points, considering higher-order moment information present in the sample. 

The numerical studies suggest that both algorithms perform well with respect to computational 
efficiency and accuracy relative 
to extant proposals, such as the maximum volume approach by \cite{bittante}, 
greedy hard thresholding, and \cite{Lasserre} multivariate Gaussian quadrature 
in particular in higher dimensions. 

Our framework allows for extensions with non-uniform weighting of the sample points, 
and expectations of a bigger class of functions rather than just polynomials. 
We expect this to be beneficial for non-smooth scenario-based problems as well.

%

\bibliographystyle{plainnat}
\bibliography{literature}
\end{document}